%% file: ShapBPT_Arxiv.tex
\documentclass[twoside]{article}

\usepackage{PRIMEarxiv}

\usepackage[utf8]{inputenc} 
\usepackage[T1]{fontenc}    
\usepackage{hyperref}       
\usepackage{url}            
\usepackage{booktabs}       
\usepackage{amsfonts}       
\usepackage{nicefrac}       
\usepackage{microtype}      
\usepackage{lipsum}
\usepackage{fancyhdr}       
\usepackage{graphicx}       
\graphicspath{{media/}}     

\usepackage{amsmath}
\usepackage{amssymb}
\newcommand{\tuple}[1]{{\langle{#1}\rangle}}

\usepackage{amsthm}

\newcommand{\setN}{\mathcal{N}}

\newcommand{\TT}{\mathcal{T}}

\mathchardef\mhyphen="2D
\newcommand{\AUIoU}{\ensuremath{\mathit{AU{\mhyphen}IoU}}}
\newcommand{\maxIoU}{\ensuremath{\mathit{max{\mhyphen}IoU}}}

\newtheorem{theorem}{Theorem}
\theoremstyle{plain}
\newtheorem{example}{\normalfont\textit{Example}}

\usepackage[linesnumbered,ruled,vlined]{algorithm2e}
\DontPrintSemicolon
\SetInd{0.3em}{0.7em}
\renewcommand{\KwSty}[1]{\textnormal{\textcolor{blue!90!black}{\bfseries #1}}\unskip}

\SetKwComment{Comment}{\color{green!50!black}/\!/\,}{}
\SetKwComment{tcp}{/\!/\,}{}

\newcommand{\algrule}[1][.5pt]{\par\vskip.1\baselineskip\hrule height #1\par\vskip.3\baselineskip} 

\renewcommand{\FuncSty}[1]{\textnormal{\textcolor[rgb]{0.3,0.1,0.4}{\bfseries #1}}}
\newcommand{\FuncCall}[2]{\FuncSty{#1}(#2)}
\SetKwProg{Function}{function}{}{}

\let\oldnl\nl
\newcommand{\nonl}{\renewcommand{\nl}{\let\nl\oldnl}}

\SetKw{This}{this}
\SetKw{None}{none}
\SetKw{True}{T}
\SetKw{False}{F}

\usepackage{xcolor}

\newenvironment{links}{%
  \begin{itemize}
  \setlength{\itemsep}{0pt}
}{%
  \end{itemize}
}

\newcommand{\link}[2]{%
  \item \textbf{#1:} \href{#2}{\texttt{#2}}
}

\usepackage{float}

\usepackage{etoc}

\title{ShapBPT: Image Feature Attributions Using Data-Aware Binary Partition Trees
\thanks{Published in AAAI 2026. Please cite the published version.}

\author{
Muhammad Rashid$^{1}$, Elvio G.\ Amparore$^{1}$, Enrico Ferrari$^{2}$, Damiano Verda$^{2}$\\[3pt]
$^{1}$University of Torino, Computer Science Department, Torino, Italy\\
$^{2}$Rulex Innovation Labs, Genova, Italy\\[2pt]
\texttt{\{muhammad.rashid, elviogilberto.amparore\}@unito.it}\\
\texttt{\{enrico.ferrari, damiano.verda\}@rulex.ai}
}

}

\date{March 2026 \\ ArXiv preprint}

\begin{document}
\maketitle

\vspace{-10pt}

\pagestyle{fancy}
\fancyhf{} 

\fancyhead[RO]{Published at the Proceedings of The 40th Annual AAAI Conference on Artificial Intelligence}

\fancyhead[LE]{\textit{ShapBPT: Image Feature Attributions Using Data-Aware Binary Partition Trees}}

\fancyfoot[C]{\thepage}

\renewcommand{\headrulewidth}{0.4pt}
\renewcommand{\footrulewidth}{0pt}

\thispagestyle{fancy}

\vspace{-20pt}

\begin{abstract}
Pixel-level feature attributions are an important tool in eXplainable AI for Computer Vision (XCV), providing visual insights into how image features influence model predictions. The Owen formula for hierarchical Shapley values has been widely used to interpret machine learning (ML) models and their learned representations. However, existing hierarchical Shapley approaches do not exploit the multiscale structure of image data, leading to slow convergence and weak alignment with the actual morphological features. Moreover, no prior Shapley method has leveraged data-aware hierarchies for Computer Vision tasks, leaving a gap in model interpretability of structured visual data.

To address this, this paper introduces ShapBPT, a novel data-aware XCV method based on the hierarchical Shapley formula. 
ShapBPT assigns Shapley coefficients to a multiscale hierarchical structure tailored for images, the Binary Partition Tree (BPT). 
By using this data-aware hierarchical partitioning, ShapBPT ensures that feature attributions align with intrinsic image morphology, effectively prioritizing relevant regions while reducing computational overhead.
This advancement connects hierarchical Shapley methods with image data, providing a more efficient and semantically meaningful approach to visual interpretability. 
Experimental results confirm ShapBPT’s effectiveness, demonstrating superior alignment with image structures and improved efficiency over existing XCV methods, and a 20-subject user study confirming that ShapBPT explanations are preferred by humans.
\end{abstract}

\keywords{Shapley values \and Binary Partition Trees \and Explainable AI \and Computer Vision}

\begin{links}
  \link{Code}{https://github.com/amparore/shap\_bpt}
  \link{Tests}{https://github.com/rashidrao-pk/shap\_bpt\_tests}
  \link{Python Package}{https://pypi.org/project/shap-bpt/}\footnote{pip install shap-bpt}
  
\end{links}

\small
\textbf{Preprint version.} This paper was published in the Proceedings of the AAAI Conference on Artificial Intelligence (AAAI 2026).

\vspace{2pt}
\textit{Please cite the published version:}\\
Rashid, M., Amparore, E. G., Ferrari, E., \& Verda, D. (2026). 
\emph{ShapBPT: Image Feature Attributions Using Data-Aware Binary Partition Trees}. 
Proceedings of the AAAI Conference on Artificial Intelligence, 40(30), 25099--25107. \\
\url{https://doi.org/10.1609/aaai.v40i30.39699}

\vspace{4pt}
\textbf{ArXiv version note.} Compared with the AAAI proceedings version, this manuscript includes additional implementation details, extended experimental discussion, and the full technical appendix.

\section{Introduction}
\label{sec:intro}


A fundamental challenge in Machine Learning (ML) for Computer Vision is explaining how a black-box model classifies images, providing insights into the representations the model has learned from data. A key approach to this problem involves attributing importance scores to individual pixels, identifying their contribution to the model’s decision-making process. This task, commonly referred to as \emph{explaining model predictions}, plays a crucial role in enhancing interpretability and trust in AI-driven image classification.
One of the most widely used methods for this purpose is SHAP (SHapley Additive exPlanations), which applies game-theoretic principles to ML explainability. SHAP combines feature removal (masking)\,\cite{lundberg2017unified} with hierarchical image partitioning\,\cite{shapPartitionExplainer}, computing feature attributions over a refinable axis-aligned (AA) grid of pixels to approximate the regions most relevant to an image classifier.
Another influential method is LIME (Local Interpretable Model-agnostic Explanations)\,\cite{ribeiro2016lime}, which, despite lacking theoretical guarantees, remains popular for its ability to pre-identify relevant image regions through segmentation. 
However, LIME and similar approaches rely on predefined segmentation matching the relevant image regions, and they cannot adaptively refine these regions if the initial segmentation is inadequate, limiting their effectiveness for complex image data.

Since models learn to recognize structured patterns from image data, an image classifier is expected to base its decisions on a hierarchical representation that captures distinct morphological characteristics—such as shape, texture, and color continuity—of the classified objects.  
A key challenge lies therefore in integrating theoretically sound attribution methods, such as Shapley coefficients, with data-aware image hierarchies. 
Computing Shapley coefficients over adaptive, data-driven hierarchical partitions can enhance interpretability by aligning attributions more closely with the model’s learned representations. 
However, for this approach to be effective, the partitions must remain flexible and refinable, rather than being imposed a priori (as done by LIME or similar approaches).

\vspace{4pt} 
\noindent 
This paper provides the following contributions:
\begin{enumerate}
    \item A novel hierarchical model-agnostic XCV method for images, named \emph{ShapBPT}, that integrates an adaptive multi-scale partitioning algorithm with the Owen approximation of the Shapley coefficients.
    We repurpose the BPT (Binary Partition Tree) algorithm~\cite{salembier2000BPT} to effectively construct hierarchical structures for explainability. 
    This approach overcomes the limitations of the inflexible hierarchies of state-of-the-art methods such as SHAP.
    \item An empirical assessment of the proposed method on natural color images showcasing its efficacy across various scoring targets, in comparison to established state-of-the-art XCV methods, and a controlled human-subject study comparing explanation interpretability across methods.
\end{enumerate}

\vspace{3pt}
\noindent
We show that the proposed approach surpasses existing Shapley-based model-agnostic XCV methods that do not leverage on data-awareness, and at the same time it achieves a significantly faster convergence rate. 
This efficiency stems from the fact that, on average, fewer recursive applications of the Owen formula (i.e. expansions of the partition hierarchy) are needed to accurately localize objects when using a \textit{data-aware} partition hierarchy, such as the proposed BPT hierarchy, compared to other hierarchies.
As far as we know, this is the first XCV method that combines the Owen formula with a data-aware partition hierarchy for image data, and with this paper we prove the effectiveness of this combined strategy for interpreting ML classifiers.


\section{Methodology}
\label{sec:Methodology}
A fundamental ML objective is to discover a function $f: \mathcal{X} \rightarrow \mathcal{Y}$ that effectively approximates a response $y \in \mathcal{Y}$ corresponding to a given input $x \in \mathcal{X}$. 
For the sake of simplicity, we assume $\mathcal{Y} \subseteq \mathbb{R}$ and $\mathcal{X} \subseteq \mathbb{R}^n$.
In many practical cases, only some components of $x$ significantly influence the response $y = f(x)$.
Understanding the relative importance, or \emph{contribution}, of each component $x_i$ of $x$ in determining the value of $y$ by $f$ is a central problem in XCV.
An important approach~\cite{covert2021explaining} for assessing these contributions is through \emph{feature removal} (also called \emph{masking}), where certain values of $x$ are replaced with values from a specified context-dependent background set. 
Let $\nu_{f,x} : 2^{|\mathcal{X}|} \rightarrow \mathcal{Y}$ be a \emph{masking function} for $f(x)$, where $\nu_{f,x}(S)$ represents the evaluation of the resulting model when only the elements in the subset $S$ of $x$ are retained, while the others are masked. 
In the following, we will denote $\nu_{f,x}$ as $\nu$.

\paragraph{Shapley values.}
We consider the setup of a $n$-coalition game $(\setN, \nu)$, which is analogous to an importance scores attribution task in XCV\,\cite{ijcai2022ShapleyInML}.
The finite set $\setN = \{1, \ldots, n\}$ is the set of players (\emph{features}). Each nonempty subset $S \subseteq \setN$ is a \emph{coalition}, and $\setN$ is itself the \emph{grand coalition}.
A \emph{characteristic function} $\nu : 2^n \rightarrow \mathbb{R}$ assigns to each coalition $S$ a (real) \emph{worth value} $\nu(S)$, and it is assumed that $\nu(\varnothing) = 0$ (it is always possible to ensure $\nu(\varnothing) = 0$ by translation of the equation system).
A \emph{marginal contribution} of a player $i$ to a coalition $S$ (assuming $i \not\in S$) is given by
\begin{equation} \label{eq:marginal}
    \Delta_i(S) = \nu(S \cup \{i\}) - \nu(S)
\end{equation}
Semivalues\,\cite{dubey1981semivalues}, weighted sums of marginal contributions \eqref{eq:marginal}, were introduced as a method for fairly distributing the total value $\nu(\setN)$ of the grand coalition $\setN$ among its members.
The Shapley value\,\cite{shapley1953value}, a well-known semivalue, demonstrates favorable axiomatic properties and has been used effectively to explain ML models \cite{ijcai2022ShapleyInML}.

\paragraph{Hierarchical coalition structures (HCS).}
A fixed a-priori \emph{coalition structure}~\cite{lopez2009relationship,owen2013book,owen1977values} for the $\setN$ players is a finite set  $\{ T_1, \ldots, T_m \}$ of $m$ partitions of $\setN$ (i.e. $\cup_{k=1}^m T_k = \setN$, and $T_i \cap T_j \neq \varnothing ~\Leftrightarrow~ i=j$). 
Elements $T_i$ are usually called \emph{partitions}, \emph{coalitions}, \emph{teams} or \emph{unions}.

We consider a recursive definition of a hierarchical coalition structure, where each partition $T$ can be either an \emph{indivisible partition} or a \emph{sub-coalition structure} itself $T = T_{1} \cup \ldots \cup T_{m}$. 
Let $T{\downarrow}$ be the (downward) recursive partitioning of $T$, defined as
\begin{equation}\label{eq:hierarchyCS}
    T{\downarrow} = \begin{cases}
        \{T_{1}, \ldots, T_{m}\} & \text{if $T$ admits sub-coalitions} \\
        \bot & \text{if $T$ is indivisible}
    \end{cases}
\end{equation}
We denote with $\TT$ the HCS root, and assume w.l.o.g. that $\TT$ contains all the elements of $\setN$.

A special case of HCS happens when each sub-coalition structure is made by two partitions, i.e. the hierarchy forms a binary tree. We refer to these structures as \emph{binary hierarchical coalition structures} (BHCS).
In that case the recursive downward partitioning of $T$ can be simplified as
\begin{equation}\label{eq:hierarchyCSbinary}
    T{\downarrow} = \begin{cases}
        \{T_{1},\, T_{2}\} & \text{\!if $T$ admits a binary sub-coalition} \\
        \bot & \text{\!if $T$ is indivisible}
    \end{cases}
\end{equation}

\paragraph{The Owen approximation for Binary HCS.}
Computing exact Shapley values is at least \#P-hard\,\cite{deng1994complexity}, which is unfeasible for image data with hundreds or thousands of features (pixels). 
An approximate approach, introduced by \cite{owen1977values}, can be used to drastically reduce the cost by grouping features into hierarchical coalitions.
This concept has been pioneered for images by the SHAP Partition Explainer\,\cite{shapPartitionExplainer,shrikumar2017DeepLIFT,lundberg2017unified}.

A \emph{coalition value} $\Omega_i(\TT)$ represents the worth of the player $i$ in a game with coalition structure $\TT$, and is known as the Owen coalition value\,\cite{owen1977values}. 
Computing coalition values over a binary HCS $T$ as defined in \eqref{eq:hierarchyCSbinary} can be done by recursively composing a coalition $Q$ using the formula

\begin{equation}\label{eq:BPT}
    \Omega_i(Q, T) = 
    \begin{cases}
        \displaystyle
        \frac{1}{2} \Omega_i(Q \cup T_{2},\, T_{1}) \,+\,
        \frac{1}{2} \Omega_i(Q,\, T_{1}) & 
            \text{if}~T{\downarrow} = \{T_{1},\, T_{2}\}
        \\[4pt]
        \frac{1}{|T|} \Delta_{T}(Q)
       & \text{if $T$ is indivisible}
        \\
    \end{cases}
\end{equation}
with $\Omega_i(\TT) = \Omega_i(\varnothing, \TT)$.
The former case of Eq.\,\eqref{eq:BPT} deals with coalitions $T$ that admit a sub-coalition structure $T{\downarrow} \neq \bot$.
We assume, for notational simplicity and without loss of generality, that $i \in T_{1}$.
The latter case of Eq.\,\eqref{eq:BPT} deals with indivisible coalitions. In that case, the formula computes a marginal contribution (uniformly divided) of all players of $T$ w.r.t. the coalition $Q$ formed recursively.

In the rest of the paper, we will refer to the Owen approximation of the Shapley values simply as the Shapley values. 
Note that Eq.\,\eqref{eq:BPT} is not found in published literature (as far as we know), and its complete derivation is therefore provided in the Technical Appendix.

\begin{theorem}
    \label{th:cost}
    \textbf{Computational cost.}
    Consider a BHCS consisting of a balanced tree of depth $d$.
    The time complexity of Eq.\,\eqref{eq:BPT} is in the order of $O(4^d)$ evaluations of the $\nu$ function.
\end{theorem}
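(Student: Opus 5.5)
We will bound the number of calls to $\nu$ made by the recursion in Eq.\,\eqref{eq:BPT}. The plan has two stages. First, count the cost of computing a single coalition value $\Omega_i(\TT)=\Omega_i(\varnothing,\TT)$ for one fixed player $i$. Second, multiply by the number of distinct leaves of the tree at which coalition values must be computed. We assume no memoization of $\nu$ evaluations across calls, which is the worst case and matches the ``order of'' claim. We interpret ``the time complexity of Eq.\,\eqref{eq:BPT}'' as the cost of obtaining the coalition values of all players. Players lying in the same indivisible leaf $T$ receive the same value $\frac{1}{|T|}\Delta_T(Q)$ summed along identical recursion paths, so one computation per leaf suffices.

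\emph{Stage one.} Let $C(k)$ be the number of $\nu$ evaluations needed to evaluate $\Omega_i(Q,T)$ when $T$ is the root of a balanced binary subtree of depth $k$, for an arbitrary coalition $Q$. At a leaf ($k=0$), $T$ is indivisible and the formula computes $\Delta_T(Q)=\nu(Q\cup T)-\nu(Q)$, so $C(0)=2$. For $k\ge 1$, $T{\downarrow}=\{T_1,T_2\}$ with $i\in T_1$. The first case of Eq.\,\eqref{eq:BPT} makes exactly two recursive calls, $\Omega_i(Q\cup T_2,T_1)$ and $\Omega_i(Q,T_1)$. Both recurse into $T_1$, which has depth $k-1$ by balancedness, and they differ only in the accumulated coalition $Q$. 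The recursion never descends into $T_2$: it only enlarges $Q$ by $T_2$. Hence $C(k)=2\,C(k-1)$, which gives $C(d)=2\cdot 2^d=O(2^d)$. Equivalently, the recursion tree for a fixed $i$ is a full binary tree of depth $d$ whose $2^d$ leaves are the root-to-leaf path of $i$ paired with each choice of ``include $T_2$ or not'' at every level, and each leaf costs two evaluations.

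\emph{Stage two.} A balanced binary tree of depth $d$ has $2^d$ indivisible leaves. Computing the coalition values for every leaf (hence for every player) therefore requires $2^d\cdot C(d)=2\cdot 4^d=O(4^d)$ evaluations of $\nu$. All remaining work (additions, the $\frac12$ weights, the division by $|T|$) is proportional to the number of recursion nodes, which is also $O(4^d)$. This gives the claim.

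The main obstacle is not the recurrence itself but fixing the right cost model. We must argue that the two branches of the first case of Eq.\,\eqref{eq:BPT} both recurse only on $T_1$, which is what yields branching factor $2$ rather than $4$ per level for a single player. We must also justify that the total is taken over the $2^d$ leaves rather than over the $n$ pixels. If $n\gg 2^d$, the per-pixel count would instead be $O(n\,2^d)$, so we state explicitly that players sharing a leaf share their computation. We would also remark that the bound is tight without caching, while caching repeated queries $\nu(Q)$ could only lower it.
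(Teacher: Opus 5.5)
Your argument is correct, but it organizes the count differently from the paper. The paper works with the recursion for all players at once. Expanding a node $(Q,T)$ with $T{\downarrow}=\{T_1,T_2\}$ spawns four children (two for $i\in T_1$, two for $i\in T_2$) and costs exactly two new evaluations, $\nu(Q\cup T_1)$ and $\nu(Q\cup T_2)$, because $\nu(Q)$ and $\nu(Q\cup T)$ are inherited from the parent, as in Algorithm~\ref{algo:queueqPartExplain}. This gives the recurrence $a(d)=4a(d-1)+2$ with $a(0)=0$.

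You instead factor the $4$ as $2\times 2$:
\begin{itemize}
\item a per-player recursion with branching factor $2$ (include the sibling or not), giving $2^d$ base cases that each cost the two evaluations of $\Delta_T(Q)$;
\item multiplied by the $2^d$ leaf-groups of players that share identical recursion paths.
\end{itemize}

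Both counts enumerate the same pairs $(Q,T)$; there are $4^k$ of them at depth $k$. Your version charges $2$ evaluations to each of the $4^d$ pairs at depth $d$, for a total of $2\cdot 4^d$. The paper charges $2$ to each of the $(4^d-1)/3$ internal splits, reusing the parent's values, for a total of $\tfrac{2}{3}(4^d-1)$. Note that the paper's displayed closed form $\tfrac{2}{3}(4^{d-1}-1)$ has an off-by-one in the exponent: it yields $a(1)=0$ instead of $2$. This is harmless for the $O(4^d)$ conclusion.

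Your route buys a clearer cost model. It isolates the $O(2^d)$ cost of a single coalition value, and it makes explicit that the total scales with the number of leaves rather than the number of pixels, which the paper leaves implicit. The paper's route buys a better constant, and it matches the iterative, value-reusing implementation that is actually run.
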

\begin{proof}
    Derivation is in Technical Appendix.
\end{proof}

Theorem\,\ref{th:cost} highlights the exponential cost of Eq.\,\eqref{eq:BPT}.
However, practical implementation of Eq.\,\eqref{eq:BPT} 
do not rely on expanding a fully balanced BHCS tree to a fixed depth $d$.
Instead, they employ an adaptive splitting strategy that is not limited to balanced trees.
In this adaptive case, a total budget $b$ of evaluations of the masked model $\nu$ is allocated. 
The adaptive algorithm then iteratively explores the tree hierarchy, at each iteration splitting the partition $T$ that maximizes the sum of its Shapley values, $\sum_{i \in T} \Omega_i(\varnothing, \TT)$. Each partition split requires $2$ model evaluations. 
A pseudo-code of this adaptive algorithm is provided in the Technical Appendix.
Despite adaptively ignoring certain coalitions, the cost of exploring the hierarchy at depth $d$ remains exponential, as stated in Theorem\,\ref{th:cost}.

\section{Hierarchical Coalition Structures for Images}
\label{sec:HCS_image_data}


Calculating Owen coalition values for image data necessitates a well-defined hierarchical structure that captures both spatial relationships and image semantics. 
Our approach is aimed at addressing limitations in existing methods, by emphasizing the importance of these factors in coalition formation.
We therefore consider and compare both \emph{data-agnostic} and \emph{data-aware} approaches.

In a \emph{data-agnostic} approach, partitions are created based on simple geometric divisions, like grids or quadrants. 
The \emph{Axis Aligned grid hierarchy} (AA hereafter)  is one such approach to building hierarchical coalition structures, adopted by the SHAP's Partition Explainer\,\cite{shapPartitionExplainer} and by h-SHAP\,\cite{teneggi2022hShap}.
In an AA hierarchy, each partition $T$ corresponds to a rectangular region within the image,
and $T{\downarrow}$ splits the rectangular region of $T$ in half along the longest axis.
This splitting process continues until indivisible (unitary) regions (i.e. single pixels) are reached, or an evaluation budget $b$ is consumed.
The main limitation of this approach is that properly localizing the relevant regions within an image may require a large number of recursive evaluation of the Owen's formula \eqref{eq:BPT}, and this evaluation follows the $O(4^d)$ time cost of Theorem \ref{th:cost}.

\begin{figure}[H]
    \centering
    \includegraphics[width=0.6\linewidth]{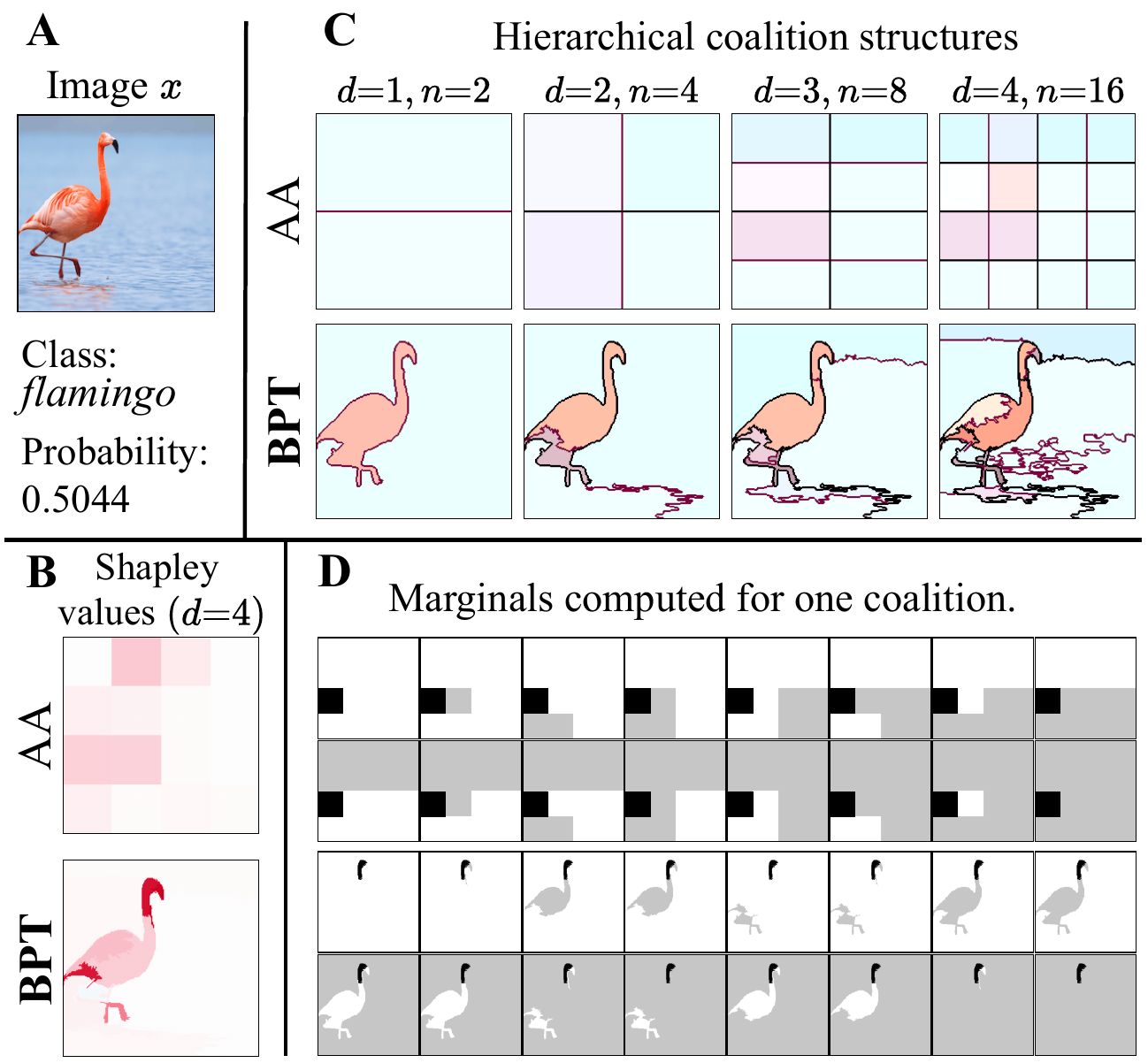}
    \caption{AA and BPT coalition structures for a sample image classification using a ResNet50 model.}
    \label{fig:HCS}
\end{figure}

In a basic \emph{data-aware} approach, morphological features within the image guide the partitioning process. This approach, pioneered by \cite{ribeiro2016lime} with LIME, utilizes a pre-defined segmentation algorithm to divide the image into regions (patches). 
Although effective, the main limitation is the lack of an effective feedback loop within the explanation method. If the segmentation is inaccurate, the resulting explanation is poor, and there is no opportunity for refinement.

\begin{figure}[H]
    \centering
    \includegraphics[width=1.0\linewidth]{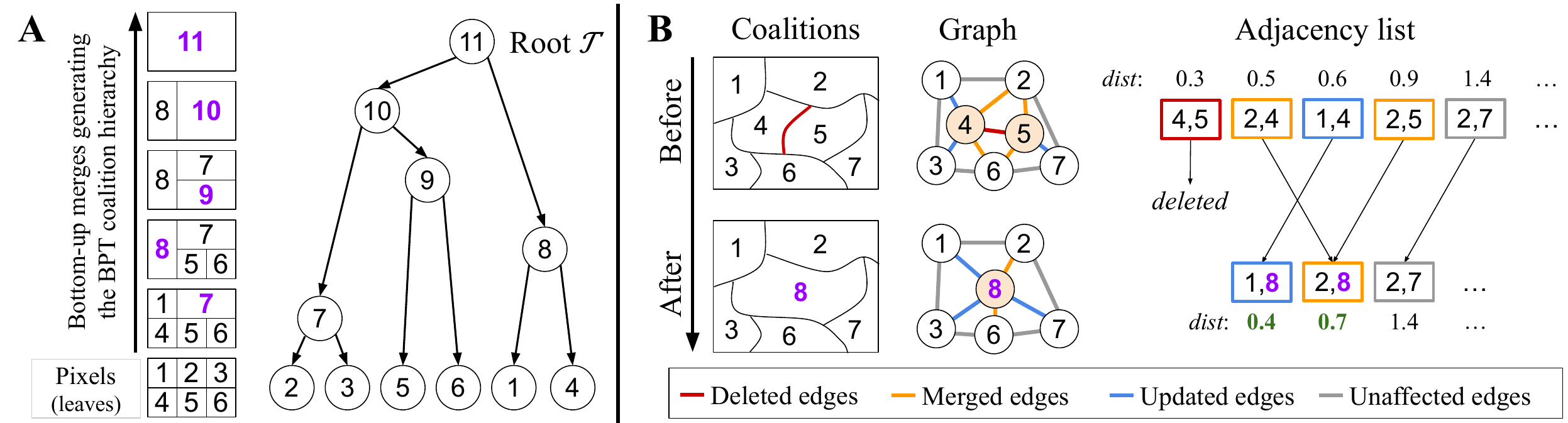}
    \caption{\textbf{(A)} BPT generating by bottom-up merging coalitions from the pixels (1--6) to the root (11). \textbf{(B)} Details of one merging step $T_8{\downarrow}=\{T_4, T_5\}$ on some arbitrary coalition structure.}
    \label{fig:BPT}
\end{figure}

A notable algorithm for hierarchical segmentation, that fits well with Eq.\,\eqref{eq:BPT}, is the \emph{Binary Partition Tree} (BPT)\,\cite{randrianasoa2018multiFeatureBPT}, originally developed for multiscale image representation in MPEG-7 encoding\,\cite{salembier2000BPT}. 
The intuitive principle is that portions of an image with similar color and coherent shape are highly likely to have similar Shapley values, thereby maximizing the effectiveness of  Eq.\,\eqref{eq:BPT}. 

Theorem\,\ref{th:cost} shows that the Owen approximation cost increases rapidly if a large number of coalitions need to be evaluated recursively. \
Therefore, an effective BHCS must satisfy these requirements:
\begin{enumerate}
    \item[R1] As few recursive cuts as possible to reach the relevant regions, as each cut increases the required evaluation budget $b$ exponentially;
    \item[R2] Partitions should not be fixed, since the relevant regions are not known in advance.
\end{enumerate}
AA hierarchies do not satisfy R1, and  most a-priori segmentation algorithms do not satisfy R2.
The solution that we propose, which constitutes the main contribution of this paper, is a novel hybrid method that finally satisfies the two aforementioned requirements by combining a refinable a-priori hierarchical coalition structure (the BPT) aligned with the morphological features of the image (e.g., color uniformity, pixel locality) together with an a-posteriori splitting strategy based on the distribution of Shapley values (as in the Partition Explainer).
This combination results in significantly fewer recursive applications of the Owen formula needed to accurately localize objects, compared to data-agnostic coalition structures.
As we shall see in the experimental section, this approach usually gets a faster convergence than other Shapley-based methods, paired with accurate shape recognition of the classified objects.

\begin{example}
Figure~\ref{fig:HCS} presents a sample image (A) with its Shapley explanations (B), computed using Eq.\,\eqref{eq:BPT} on AA and BPT hierarchical coalition structures (C) up to depth $d=4$. The first four tree hierarchy levels in (C) highlight the data-aware nature of BPT. Each coalition value is derived from a weighted sum of eight marginals $\widehat{\varphi}_i(Q, T)$, with the highest-value marginals shown in (D), where $Q$ and $T$ correspond to the grey and black regions. 
\end{example}

\paragraph{Generating BPT hierarchies.}
A \emph{BPT hierarchy} captures how we can progressively merge\,\cite{randrianasoa2018multiFeatureBPT} the $n$ pixels of an image $x$ into larger regions, forming a quasi-balanced binary tree.
Tree construction is bottom-up, starting from an initial coalition structure $\TT_{[1]} = \bigl\{ T_{1}{=}\{1\}, \ldots, T_{n}{=}\{n\} \bigr\}$ made by $n$ unitary and indivisible partitions, where the features $1, \ldots, n$ represents the individual pixels of the image.
Two partitions $T_i,T_j \in \TT_{[k]}$ are \emph{adjacent} if there is at least one pixel of $T_i$ that is adjacent to a pixel of $T_j$ in the image.
The BPT construction involves merging adjacent partitions iteratively.

A \emph{coalition merge} of $\TT_{[k]}$ is a new coalition structure $\TT_{[k+1]}$ where two adjacent partitions $T_i, T_j \in \TT_{[k]}$ are removed and replaced by a new partition $T_{n+k}$, s.t. $T_{n+k} = T_i \cup T_j$ and $T_{n+k}{\downarrow} = \{T_i,T_j\}$.

The two adjacent partitions $T_i, T_j$ of $\TT_{[k]}$ to be merged are selected by minimizing a \emph{data-aware} distance function.
Prior work\,\cite{randrianasoa2018multiFeatureBPT,randrianasoa2021agatBPT} on BPTs shows that color range $\times$ perimeter scores correlate with perceptual region uniformity, and area helps in keeping the tree hierarchy balanced.
With this knowledge we define
\begin{equation}
\label{eq:bpt:distance}
    \mathit{dist}(T_i,T_j) =  
        \mathit{clr}^2(T_i,T_j) \cdot \mathit{area}(T_i,T_j) \cdot \sqrt{\mathit{pr}(T_i,T_j)}
\end{equation}
as a distance criteria, where $\mathit{clr}^2(T_i,T_j)$ is the sum of the squared color ranges of $T_i \cup T_j$, for all color channels, 
and $\mathit{area}(T_i,T_j)$ and $\mathit{pr}(T_i,T_j)$ are the area and the perimeter of $T_i \cup T_j$, respectively.
A sensitivity ablation analysis that supports the rationale of Eq.\,\eqref{eq:bpt:distance} is in the Technical Appendix.

A \emph{merging sequence} $\TT_{[1]}\rightarrow\TT_{[2]}\rightarrow\ldots\rightarrow\TT_{[n]}$ is a sequence of $n-1$ coalition merges.
The sequence ends with the coalition structure $\TT_{[n]} = \bigl\{ T_{2n-1} \bigr\}$, having a single partition with all pixels.
At this point, all non-unitary partitions $T$ at any point in the merging sequence admit a binary sub-coalition structure $T{\downarrow}$.
Therefore, the BPT $\TT_{[n]}$ satisfies Eq.\,\ref{eq:hierarchyCSbinary}, and may become the root $\TT$ of the BHCS. 
An illustration of the algorithm generating the BPT merging sequence is shown in Figure\,\ref{fig:BPT}/A, where the unitary partitions are merged, one by one, until all pixels are merged into the root $\TT$.
The operations needed to perform a single merging step are illustrated in Figure\,\ref{fig:BPT}/B, and a detailed pseudo-code of the BPT algorithm is provided in the Technical Appendix.


\section{Experimental Assessment} 
\label{sec:experiments}

We present a comparative analysis of the performance of the proposed Shapley method using BPT partitions, alongside other state-of-the-art image explainers.

\paragraph{Comparison scores.}
To ensure a robust and comprehensive quantitative evaluation, we consider two score categories: \textit{response-based} and \textit{ground-truth-based}.
The \textit{response-based} score that we consider are the \emph{area-under-curve} (AUC) from \cite{riseScores}, which measure how well the ranked explanation coefficients align with the black-box model's output. These scores do not rely on any predefined notion of “correct” explanation and instead evaluate the internal consistency of the explanation with respect to the model’s own behavior.
Let $S^{[q]} \subseteq \setN$ be the subset of the first $q$-th quantile of elements from $\setN$ with the largest Shapley values.
Define
\begin{equation}
\label{eq:AUC}
\begin{aligned}
\mathit{AUC}^{+} &\!=\!\! 
    \int_{0}^{1}\!\! \nu\bigl( S^{[q]} \bigr) \,\mathrm{d}q 
\\
\mathit{AUC}^{-} &\!=\!\!
    \int_{0}^{1}\!\! \nu\bigl( \setN \setminus S^{[q]} \bigr) \,\mathrm{d}q 
\end{aligned}
\end{equation}
With this definition $\mathit{AUC}^{+}$ (resp. $\mathit{AUC}^{-}$) evaluate the model's behavior as features are progressively included from an empty set (resp. excluded from the full set).
Since we deal with regression models, we rescale\,\cite{hama2023deletion} all $\nu$ values in the $[0, 1]$ range, s.t. all evaluated samples weight uniformly.

The \textit{ground-truth-based} score we consider is the Intersection over Union (IoU) score, which compares the predicted important features with a known \emph{ground truth} subset $G \subseteq \setN$. Ideally $G$ is a set for which $\nu(G) = \nu(\setN)$. This setup is relevant in the context of the \emph{Visual Recognition Challenge} (VRC) \cite{ILSVRC15}, where annotations provide an external reference for which image regions are expected to contribute to classification.
An explanation is a \emph{perfect match} if there is a threshold $q$ for which $S^{[q]} = G$.
Consider the standard \emph{Intersection-over-Union} score $\mathit{J}(A,B) = \frac{\,|A \cap B|\,}{|A \cup B|}$ and define
\begin{equation}
\begin{aligned}
    \AUIoU &= \int_0^1 \mathit{J}(S^{[q]}, G) \,\mathrm{d}q \\
    \maxIoU &= \underset{q \in [0,1]}{\mathrm{max}} 
    \mathit{J}(S^{[q]}, G)
\end{aligned}
\label{eq:IoU}
\end{equation}
The score $\AUIoU$\,\cite{GANGOPADHYAY202364} is the area under the IoU curve, defined by the IoU values in the range $q \in [0,1]$, and $\maxIoU$ is the curve maximum. The $\AUIoU$ is maximal when the explanation perfectly matches the ground truth mask, and in such case $\maxIoU = 1$.

\begin{figure}[H]
    \centering
    \includegraphics[width=0.5\linewidth]{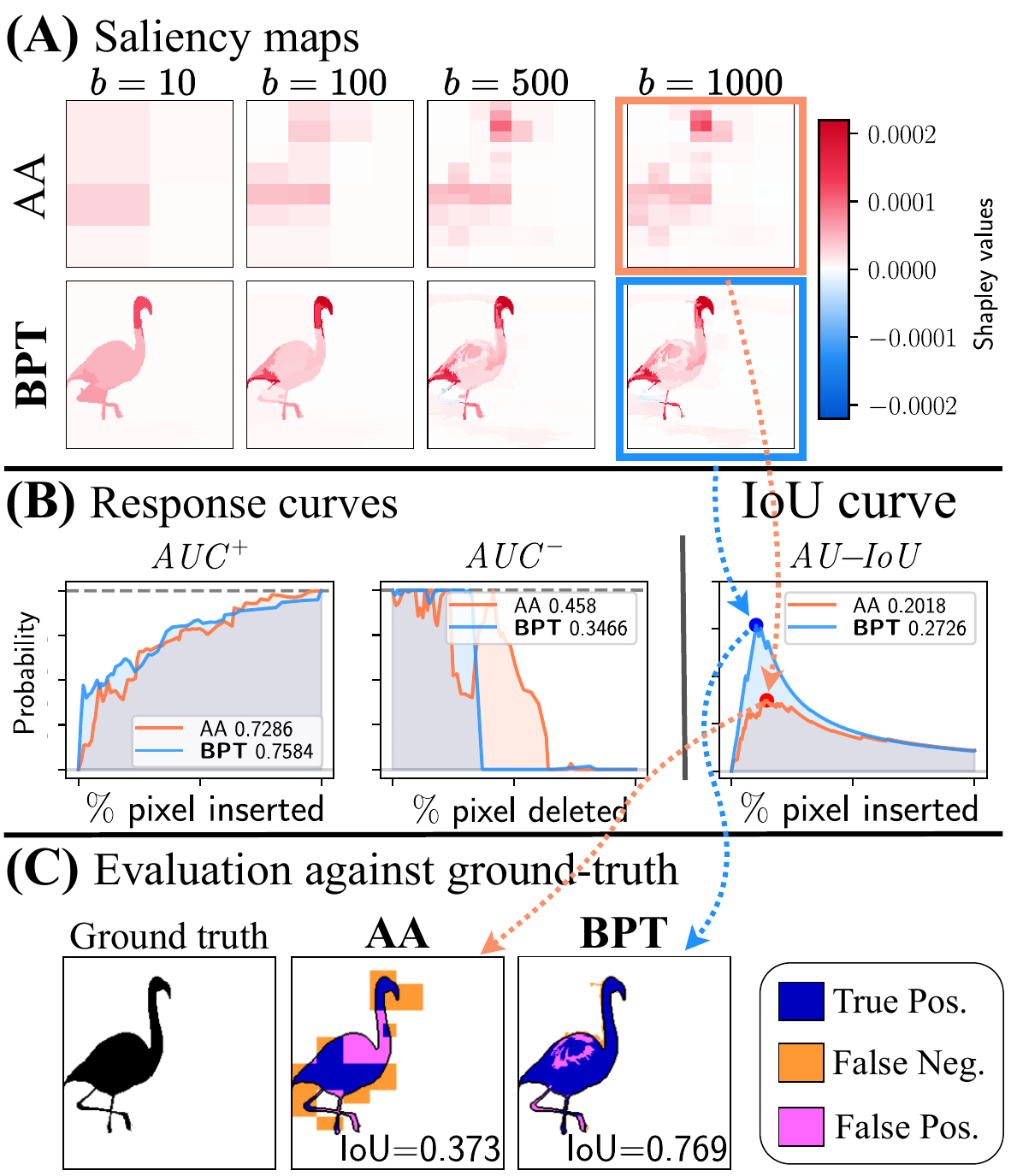}
    \caption{Shapley values for AA and BPT coalition structures, for different values of the budget $b$.}
    \label{fig:shapley}
\end{figure}

\begin{example}
Figure~\ref{fig:shapley} shows the Shapley values computed using Eq.\,\eqref{eq:BPT} on the AA and BPT coalition structures, 
by refining the most significant coalition using a budget $b$ of model evaluations \textbf{(A)}, 
with $b$ equal to $10$, $100$, $500$ and $1000$ samples, respectively.
The area identified by the threshold $q$ obtaining the maximal IoU is depicted in \textbf{(C)}.
The plots \textbf{(B)} depict the response curves for the AUC scores \eqref{eq:AUC} and \eqref{eq:IoU}, for the case $b{=}1000$.
In the example, BPT demonstrates its improved object region recognition w.r.t. AA.
\end{example}

\begin{figure*}[t]
    \centering
    \includegraphics[width=1.0\linewidth]{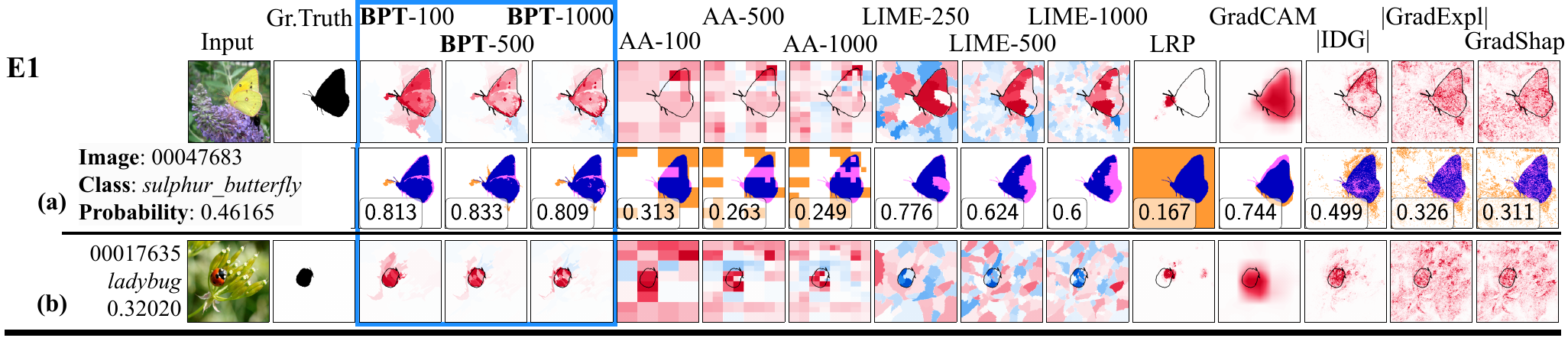}
    \includegraphics[width=1.0\linewidth]{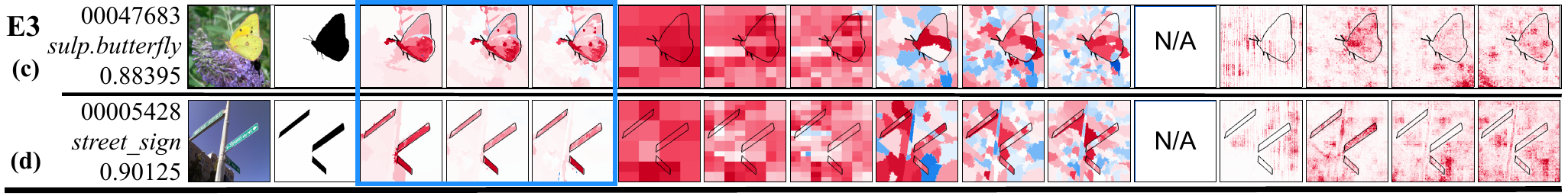}
    \includegraphics[width=1.0\linewidth]{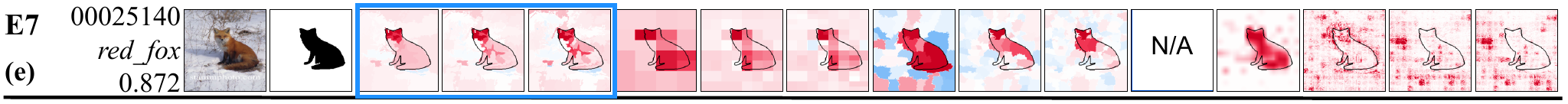}
    \includegraphics[width=1.0\linewidth]{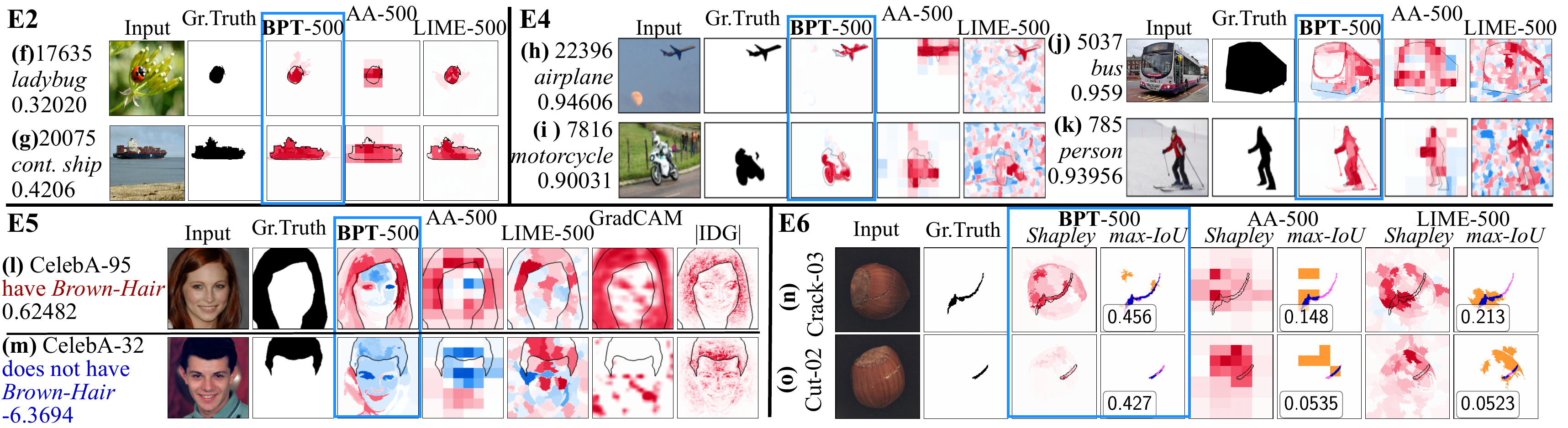}

    \caption{Selected saliency maps from experiments \textbf{E1}--\textbf{E7} (summarized in Table \ref{tab:summary}) for various computer vision ML tasks.}
    \label{fig:saliencyMaps}
\end{figure*}

\paragraph{Compared methods.}
We run a comparative analysis using several state-of-the-art XCV methods, categorized into two groups.
The first group comprises Shapley-based methods, chosen for their compatibility with our proposed approach. 
They include:
\textbf{BPT}-$b$: our proposed Shapley explanation method with BPT partitions, with sample budgets $b$ of 100, 500, and 1000 samples;
\textbf{AA}-$b$: the SHAP Partition Explainer\,\cite{shapPartitionExplainer}, utilizing Axis-Aligned partitions with $b$ of 100, 500, and 1000 samples;
\textbf{LIME}-$b$: LIME\footnote{Although LIME does not generate Shapley values, it has theoretical and practical similarities to them\,\cite{lundberg2017unified}.} explanation\,\cite{ribeiro2016lime} with budget $b$ and with $b/5$ segments,  with $b$ being 100, 500, and 1000.

The second group consists of gradient-based methods, included in our analysis due to their widespread usage. They include:
\textbf{GradExpl}: the Gradient Explainer from the SHAP package\,\cite{lundberg2017unified}, using the default of 20 samples;
\textbf{GradCAM}: the Gradient-weighted Class Activation Mapping introduced by\,\cite{gradcampytorch};
\textbf{IDG}: the Integrated Decision Gradient method of\,\cite{walker2024integrated};
\textbf{LRP}: Layer-wise Relevance Propagation of \cite{bach2015pixel,ancona2018towards} from Captum;
\textbf{GradShap}: gradient Shap\,\cite{sundararajan2017axiomatic}.
For \emph{GradExpl} and \emph{IDG}, we utilize the absolute values of the produced explanations, resulting in superior scores compared to the signed values.

\begin{table}[t]
    \centering\scalebox{1.0}{
    \begin{tabular}{c|c|c|c|c}
        \hline
        \textbf{} & \textbf{Dataset} & \textbf{Size} & \textbf{Model} & \textbf{Short description} \\
        \hline
        \hyperref[app:Experiment-E1]{E1} & ImageNet-S$_{50}$ & 574 & ResNet50 & Common ImageNet setup \\
        \hyperref[app:additionalExperiment-E2]{E2}& ImageNet-S$_{50}$ & 574 & Ideal & Linear ideal model \\
        \hyperref[app:additionalExperiment-E3]{E3} & ImageNet-S$_{50}$ & 621 & SwinViT & Vision Transformer\\
        \hyperref[app:additionalExperiment-E4]{E4} & MS-COCO & 274 & Yolo11s & Object detection \\
        \hyperref[app:additionalExperiment-E5]{E5} & CelebA & 400 & CNN & Facial attrib. localization \\
        \hyperref[app:additionalExperiment-E6]{E6} & MVTec & 280 & \!VAE-GAN\! & Anomaly Detection \\
        \hyperref[app:additionalExperiment-E7]{E7} & ImageNet-S$_{50}$ & 593 & ViT-Base16 & Vision Transformer \\
        \hline
        \hyperref[app:experiment-E8]{E8} & \multicolumn{4}{c}{User preference study using E1 saliency maps.} \\
        \hline
    \end{tabular}
    }
    \caption{Summary of the experiments.}
    \label{tab:summary}
\end{table}

\paragraph{Experiments.}
ShapBPT has been tested extensively over multiple computer vision tasks, models and datasets. 
Table~\ref{tab:summary} reports a summary of the experiments. 
Figure~\ref{fig:saliencyMaps} depicts examples of the generated saliency maps for the first seven experiments, which helps to get a first intuition of the characteristics of the BPT method. 
Each row reports the image, the ground truth $G$, and the saliency maps.
We show all the fourteen tested method only for \textbf{E1}. 
The boundaries of $G$ are drawn overlapped to the saliency maps.
To illustrate the evaluation process, for the first image, we also report the optimal IoU w.r.t.~$G$.
While all the tested methods seem to somewhat agree on the recognition area, the practical behavior of BPT seems in line with its theoretical assumption that splitting the image partitions following the morphological image boundaries leads to better object recognition, and better separation from the background.

Experiments are briefly detailed in the following. Further details, examples and results are in the Technical Appendix.

\begin{figure}[t]
    \centering
    \includegraphics[width=1.0\linewidth]{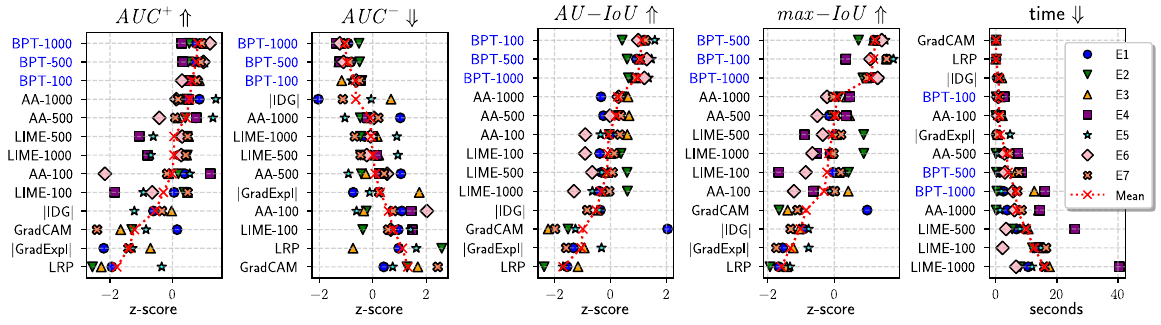}
    \caption{Results for all scores across the \textbf{E1}--\textbf{E7} experiments, with methods (on Y axis) ranked by performance (top to bottom).}
    \label{fig:resultsAll}
\end{figure}

\medskip\noindent Experiment \textbf{E1} 
uses the \emph{1K-V2} pretrained\,\cite{vryniotis2021train} ResNet50\,\cite{resnet50} model (from PyTorch, accuracy $80.858\%$) over the ImageNet-S$_{50}$ dataset\,\cite{gao2022lussImageNetS} with ground truth masks (574 images in total).
Replacement value is uniform gray.
In general BPT explanations (columns 3--5) show a better tendency of identifying the partition borders, cutting the recognized object from the background.
In that sense, they share similarities with the explanations of LIME, but without the typical LIME noise, and without relying on a fixed, inflexible segmentation.
Moreover BPT explanations look a lot more in accordance to those of GradCAM, but without the blurriness that the latter adds.

\medskip\noindent Experiment \textbf{E2} evaluates an ideal linear model which perfectly follows the ground truth. Let
$\nu_\text{lin}(S) = \tfrac{|S \cap G|}{|G|}$
be an ideal linear predictor that outputs the proportion of pixels of $S$ that belong to the ground truth $G$. Since $\nu_\text{lin}$ is not a neural network, CAM methods cannot be used and are excluded.

\medskip\noindent Experiment \textbf{E3} uses the pretrained Vision Transformer model \emph{SwinViT}\,\cite{liu2021swin} from pytorch (acc. $81.4$\%).
It is interesting to see that all methods except BPT produce significantly more confused saliency maps, attributing a lot of importance to background features and with little focus to the actual classified objects. On the contrary, saliency maps obtained by the BPT method are clear and focused.

\medskip\noindent Experiment \textbf{E4} uses the Ultralytics Yolo11s model\,\cite{yolo11ultralytics} pre-trained for the MS-COCO dataset\,\cite{MSCOCO} which has diverse image sizes and a wider range of details than ImageNet. 
The XCV task involves highlighting detected objects.

\medskip\noindent Experiment \textbf{E5} uses a pre-trained CNN model\,\cite{Kartik} to predict the presence or absence and the localization of facial features like \emph{brown hair} and \emph{eyeglasses} on the CelebA-HQ dataset\,\cite{karras2018progressive}. 
The XCV task focuses on localizing regions positively (red) or negatively (blue) influencing predictions. For this kind of tasks, Shapley values correctly distinguish positive and negative contributions, unlike CAM methods.

\medskip\noindent Experiment \textbf{E6} 
focuses on explaining an Anomaly Detection (AD) system. It builds on the \cite{ravi2021general} methodology, and uses a convolutional VAE-GAN model for anomaly localization. 
The MVTec dataset\,\cite{bergmann2019mvtec} (\emph{hazelnut} category) is used, with 280 high-quality images of defective (with ground truth masks) and non-defective objects.
The anomaly map captures reconstruction errors, reflecting both anomalies and noise, and the XCV tasks consists in separating true anomalies from noise.

\medskip\noindent Experiment \textbf{E7} Similar to E1 using the ViT-Base 16 model\,\cite{dosovitskiyimage}.

\paragraph{Numerical results.}
Figure~\ref{fig:resultsAll} summarizes the results from all experiments, with a separate table for each of the four scores, plus one for the wall-clock evaluation time\footnote{Using: Intel Core i9 CPU, Nvidia 4070 GPU, 16GB RAM.}.
To ensure fairness across experiments, scores have been standardized accordingly.
A red line indicates the overall mean across all experiments.
Methods are ordered based on their mean scores from top (better) to bottom (worse).
To assess statistical significance, we conducted one-way ANOVA tests for each score, testing the null hypothesis ($H_0$) of equal means across all sample populations, with $p$-value significance threshold of $0.05$. In all cases, the null hypothesis was rejected, indicating that the results are statistically significant. Results are reported in the Technical Appendix.

From Figure~\ref{fig:resultsAll}, we observe that BPT consistently outperforms AA and other compared methods across various models, scoring methods, and datasets.
This supports the intuition that tailoring partitions to the characteristics of the data is beneficial.
Moreover, BPT maintains its advantage even under resource constraints, as BPT-100 already surpasses most competing methods. 
This highlights its adaptability to different experimental setups and its robust generalization ability across datasets and models. 
In particular, ShapBPT seems to work well also with Vision Transformers (\textbf{E3} and \textbf{E7}), which are known for their robustness to partial object occlusion\,\cite{Englebert_2023_ICCV}.

\paragraph{E8 - User preference study.}
In addition to the automated metrics, we measured \emph{perceived usefulness} with a small controlled user study with 20 participants. 
Each subject viewed four randomly selected images from \textbf{E1} and ranked the four explanation maps (BPT-1000, GradCAM, LIME-1000, AA-1000) from most to least helpful for understanding the model’s prediction, yielding 80 rank-lists per method.
A Friedman test determined the significance 
($\chi^2_{(k=3)} {=} 19.56$, $p$-value${=}0.0002$, $H_0$ rejected).
BPT emerged as the winner, ranked first in 51\% of the cases with an average rank of 1.79, trailed by GradCAM (33\% first-place, mean 2.41) and LIME (10\%, mean 2.56), while AA was seldom preferred (6\%, mean 3.24). 
Human preference seems therefore to partially confirm the quantitative metrics of the \textbf{E1}--\textbf{E7} experiments.
Full details are in the Technical Appendix.

\section{Discussion and Other Related Works}
\label{sec:discussion}

Our evaluation combines both \textit{ground-truth-based} metrics (IoU) and \textit{response-based} metrics (AUC) to provide a comprehensive and reliable assessment of the methods. 
IoU scores are included because they are the standard evaluation metric in object detection benchmarks~\cite{rezatofighi2019generalized}. While deep learning models may show misalignments between the ground truth $G$ and the model's learned representation, this should not introduce bias in the $\AUIoU$ and $\maxIoU$ scores, as all methods are evaluated under the same conditions.
Moreover, experiment \textbf{E2} is fully unbiased, since the ideal model $\nu_\text{lin}$ is a linear model.

A convergence analysis comparing BPT and AA across varying evaluation budgets is in the Technical Appendix.

For LIME, we generated fixed a priori partitions using the \emph{quickshift} algorithm and also tested the more recent \emph{SegmentAnything} (SAM) method, which improves upon \emph{quickshift} but is significantly slower. However, neither of these methods can build the hierarchy of Shapley values adaptively. The limitation of relying on rigid, pre-defined partitions persists, an issue that is addressed by the proposed BPT approach (as outlined in requirement R2). 
It would be interesting to integrate SAM directly into ShapBPT and compare it against BPT. However SAM does not generate a regular HCS\,\cite{knab2025beyond}, which is a key requirement of the Owen formula. 
Constructing a SAM-compatible HCS therefore demands new algorithmic machinery, beyond the scope of the present study, and merits a dedicated investigation as future work. 
We outline the key details in the Technical Appendix.
A discussion on h-Shap limits is in Technical Appendix.

We considered the \emph{relevance mass and rank accuracy} scores~\cite{CLEVRXAI202214} but eventually excluded them, as their reliance on non-negative values does not work well with Shapley values.

User preference results echo the objective ones: a 20-participant study confirmed that the data-aware BPT hierarchy yields explanations humans actually find most useful.


\section{Conclusions}
\label{sec:conclusions}

This paper introduces \emph{ShapBPT}, a model-agnostic explainability method for AI classifiers in computer vision. 
It computes saliency maps by calculating Shapley values using the Owen formula over a data-aware \emph{Binary Partition Tree} (BPT) of the image being explained.
That captures the importance of image features in a way that is both efficient and consistent with Shapley’s axiomatic properties.

Comprehensive cross-dataset benchmarks and a 20-subject preference study consistently place ShapBPT’s data-aware hierarchical partitions ahead of existing XCV explainers, confirming it as a novel, robust method that delivers accurate, budget-efficient, and human-preferred explanations.

\section{Acknowledgments}

This work has received funding from the European Union’s
Horizon research and innovation program Chips JU under Grant Agreement No. 101139769,
DistriMuSe project (HORIZON-KDT-JU-2023-2-RIA). 
The JU receives support from the
European Union’s Horizon research and innovation
programme and the nations involved in the mentioned
projects. The work reflects only the authors’ views; 
the European Commission is not responsible for any use that may
be made of the information it contains.

\bibliographystyle{unsrt}  
\bibliography{references}

\input{technical_appendix}

\end{document}

%% file: technical_appendix.tex
\clearpage


\section{ Technical Appendix}
\label{sec:appendix_section}

\etocsettocstyle{\section*{Appendix Contents}}{}
\localtableofcontents

\vspace{1em}

\subsection{Derivation of Equation \eqref{eq:BPT}}
\label{app:BPTequation}

We present a clear formulation of the Owen approximation of Shapley values within a hierarchical coalition structure, as this specific approach appears to be absent from existing published literature.
To ease our formulation, we start from a simple extension of the Shapley formula:
\begin{equation} \label{eq:ShapleyQ}
    \varphi_i(Q,\setN) = \sum_{S \subseteq \setN \setminus\{i\}}
        \frac{1}{n \cdot\binom{n-1}{|S|}} \Delta_i(Q \cup S)
\end{equation}
where $n$ is the cardinality of $\setN$.
Eq.~\eqref{eq:ShapleyQ} assigns a unique distribution of the total worth $\nu(\setN)$ generated by cooperation among players in a coalition game, and is extended by assuming that all coalitions  $S$ are supported by a persistent set of players $Q$. 
The regular Shapley value\,\cite[Eq.12]{shapley1953value} are obtained from \eqref{eq:ShapleyQ} as $\varphi_i(\varnothing,\setN)$. 
The persistent set $Q$ is used for the Owen approximation.

The Owen coalition value\,\cite{owen1977values} is an extension of the Shapley value, and it is a quantity $\Omega_i(\TT)$ that represents the worth of player $i$ in a game with coalition structure $\TT$.
The original formulation for a two-level coalition structure hierarchy\footnote{In a two-level coalition structure hierarchy $\TT$, we have $\TT{\downarrow} = \{T_1 \ldots T_m\}$, and $\forall\,1 \leq i \leq m$: $T_i{\downarrow} = \bot$.} works as follows.
Consider a player $i$ belonging to team $T_j \in \TT{\downarrow}$. Then
\begin{equation} \label{eq:Owen1977}
    \Omega_i(\TT) = 
        \sum_{\substack{H \subset M \\ j \not\in H}}
        \sum_{\substack{S \subset T_j \\ i \not\in S}}
        \frac{1}{m\cdot\binom{m-1}{|H|}}
        \cdot 
        \frac{1}{t_j \cdot \binom{t_j-1}{|S|}}
        \Delta_i(Q_H \cup S)
\end{equation}
where $M=\{1 \ldots m\}$ is the set of structured coalition indices of $\TT$,
$Q_H = \bigcup_{k \in H} T_k$, and $t_j = |T_j|$.

Eq.~\eqref{eq:Owen1977} can be seen as a two-level Shapley value, where inside a team $T_j$ all coalitions are possible, but once a coalition $S \subset T_j$ is formed, only a restricted \emph{all-or-nothing} form of cooperation with the other teams is possible.
It is possible to rewrite \eqref{eq:Owen1977} by explicitly identifying the Shapley value for the subsets $S$ of $T_j$. 
By doing so with \eqref{eq:ShapleyQ} and applying simple algebraic transformations, we get
\begin{equation} \label{eq:OwenRewritten}
    \Omega_i(\TT) = 
        \sum_{\substack{H \subseteq M \setminus \{j\}}}
        \frac{1}{m \cdot\binom{m-1}{|H|}}
        \varphi_i(Q_H, T_j)
\end{equation}
i.e. the Owen coalition value is defined on the basis of the Shapley value (extended as in Eq.\,\eqref{eq:ShapleyQ}), similarly to the approach of the so-called ``\emph{two-steps value}'' formulation of \cite[p.300]{owen2013book}.

\begin{example}
Consider a coalition structure 
$\TT = \bigl\{ \{1,2\}, \{3,4,5\}, \{6\}\bigr\}$. 
The coalition value $\Omega_1(\TT) = \eta_1(\varnothing,\TT)$ is the weighted sums of eight marginals:
\begin{equation}
\everymath={\displaystyle}
\begin{array}{cc}
    \frac{1}{6}\Delta_1(\varnothing) &\quad \frac{1}{6}\Delta_1(\{2\}) \\ \frac{1}{6}\Delta_1(\{3, 4, 5, 6\}) &\quad \frac{1}{6}\Delta_1(\{3, 4, 5, 6, 2\}) \\
    \frac{1}{12}\Delta_1(\{6\}) &\quad \frac{1}{12}\Delta_1(\{6, 2\}) \\ \frac{1}{12}\Delta_1(\{3, 4, 5\}) &\quad \frac{1}{12}\Delta_1(\{3, 4, 5, 2\}) 
\end{array}
\end{equation}
Since player $1$ is in an a-priori coalition with player $2$, the other two teams $\{3,4,5\}$ and $\{6\}$ can only appear as a whole.
As a consequence, the Owen approximation of the Shapley coefficients only observes some coalitions, that preserve the integrity of the teams that are in a separate branch of the tree hierarchy.
\end{example}

Observe that $\Omega_i(\TT) \neq \varphi_i(\varnothing,\setN)$, as only a selected structured subsets of coalitions are formed (see  \cite{lopez2009relationship} for an in-depth analysis of this relation).

The two-level formulation is easily extended to an arbitrary hierarchy of coalitions, and this idea has been pioneered for image data by the SHAP Partition Explainer\,\cite{shapPartitionExplainer,shrikumar2017DeepLIFT,lundberg2017unified}. 
Therefore a hierarchical \emph{Owen coalition value} can be obtained rewriting Eq.\,\eqref{eq:OwenRewritten} on top of other Owen coalition values for a coalition $T$, as long as $T$ is not an indivisible coalition. 
The concept is also briefly sketched in\,\cite[p.87]{owen1977values}, but we rewrite the equation to have a simple recursive formula that is general for $m$-ary and binary hierarchical coalition structures, as in Eqs. \eqref{eq:hierarchyCS} and \eqref{eq:hierarchyCSbinary}, respectively.

\paragraph{Binary and multi-way tree hierarchies (i.e. $m>2$).}
Consider Eq.\,\eqref{eq:OwenRewritten} and replace the summation over the subsets of indices $M$ with a uniform \emph{subset $U$ of the sub-coalition structure of $T{\downarrow}$}, making the marginal contribution of Eq.\,$\eqref{eq:marginal}$ as the base case of the recursion, and adding a persistent set $Q$ as done for Eq.\,\eqref{eq:ShapleyQ}.
\begin{equation}\label{eq:OwenRecursivePart}
    \Omega_i(Q, T) = 
    \begin{cases}
        \displaystyle
        \sum_{\substack{U \subseteq T{\downarrow} \setminus \{T_j\}}}
        \frac{1}{m \cdot\binom{m-1}{|U|}}
        \Omega_i(Q \cup Q_U,\, T_j)
        & \text{if}~T{\downarrow} = \{T_{1} \ldots T_{m}\}
        \\[15pt]
        \frac{1}{|T|} \Delta_{T}(Q)
        & \text{if $T$ is indivisible}
        \\
    \end{cases}
\end{equation}
where $Q_U = \bigcup_{k=1}^{|U|}U_k$, and assuming $T_j$ contains $i$.
As before, indivisible coalitions receive uniform attributions among all players. 
The Owen coalition value for player $i$ using Eq.\,\eqref{eq:OwenRecursivePart} is obtained from $\Omega_i(\varnothing, \TT)$, with $\TT$ the HCS root.
When $\TT = \{\setN\}, \TT{\downarrow} = \bot$, then Eq.\,\eqref{eq:OwenRecursivePart} reduces to $\varphi_i(Q, \setN)$, which is trivially equivalent to Eq.\,\eqref{eq:ShapleyQ}. 
Using a two-level HCS, then Eq.\,\eqref{eq:OwenRecursivePart} is equivalent to Eq.\,\eqref{eq:Owen1977} and Eq.\,\eqref{eq:OwenRewritten}.
For arbitrary nested hierarchies, the equation expands, generating the coalitions $Q$ that may pair with the set $T$ containing player $i$, following the hierarchy constraints.

\begin{example}
Consider a three-level HCS 
$$
\TT = \Bigl\{
  \bigl\{ \{1,2\}, \{3,4\} \bigr\},
  \bigl\{ \{5,6\}, \{7\}, \{8\} \bigr\}
\Bigr\}
$$
The hierarchical coalition value $\Omega_1(\varnothing,\TT)$ is the weighted sums of eight marginals:
\begin{equation}
\everymath={\displaystyle}
\begin{array}{cc}
    \frac{1}{8}\Delta_1(\varnothing) &\quad \frac{1}{8}\Delta_1(\{2\}) \\ \frac{1}{8}\Delta_1(\{5, 6, 7, 8\}) &\quad \frac{1}{8}\Delta_1(\{5, 6, 7, 8, 2\}) \\
    \frac{1}{8}\Delta_1(\{3, 4\}) &\quad \frac{1}{8}\Delta_1(\{3, 4, 2\}) \\ \frac{1}{8}\Delta_1(\{5, 6, 7, 8, 3, 4\}) &\quad \frac{1}{8}\Delta_1(\{5, 6, 7, 8, 3, 4, 2\}) 
\end{array}
\end{equation}
Coalitions can pair with player $1$ following the hierarchy. 
Therefore $\{3,4\}$ and $\{5,6,7,8\}$ can only appear as a whole block from the point-of-view of player $1$, even if the partition $\{5,6,7,8\}$ is not a single coalition.
\end{example}

Eq.\,\eqref{eq:OwenRecursivePart} applies to $m$-ary coalition structure, but the case for binary hierarchies is simpler.
By assuming $m=2$, the formula $\Omega_i(Q, T)$ of Eq.\,\eqref{eq:OwenRecursivePart} can be simplified, obtaining Eq.\,\eqref{eq:BPT} and completing our derivation. 

\subsection{Proof of Theorem \ref{th:cost}}
\label{app:proofCost}

Applying Eq.\,\eqref{eq:BPT} to a partition $T$ that admits a sub-coalition structure $T{\downarrow}=\{T_{1},\, T_{2}\}$ creates four branches (two for $i \in T_{1}$ and two for $i \in T_{2}$) and necessitates two $\nu$ evaluations. 
Since we are assuming the BHCS hierarchy to be a balanced tree with depth $d$, we can define the total number $a(d)$ of $\nu$ evaluations for the expansion of all nodes up to depth $d$.
Such quantity $a(d)$ follows a linear recurrence sequence represented by Eq.\,\eqref{eq:costRecur}
\begin{equation}\label{eq:costRecur}
    a(d) = \begin{cases}
        4 \cdot a(d-1) + 2 & \text{if }d > 0 \\
        0 & \text{if }d=0
    \end{cases}
\end{equation}
Recursion from Eq.\,\eqref{eq:costRecur} can be eliminated, since the equation is a well-known non-homogeneous linear recurrence with constant coefficients, having solution
\begin{equation}
    a(d) = \alpha\cdot a(d-1) + \beta = \frac{\beta(\alpha^{d-1} - 1)}{\alpha - 1}
\end{equation}
By using $\alpha=4$ and $\beta=2$, Eq.\,\eqref{eq:costRecur} simplifies to
\begin{equation}
a(d) = \frac{2}{3}(4^{d-1} - 1) \approx O(4^d)
\end{equation}
i.e., the time complexity of Eq.\,\eqref{eq:BPT} exhibits exponential growth.

\subsection{Pseudo-code of the Owen approximation algorithm}
\label{app:pseudoCodeOwen}

A limitation of equation Eq.\,\eqref{eq:BPT} is that the same coalitions are generated in the recursive expansion of $\Omega_i(\varnothing, \TT)$, for different players $i \in \setN$.
This issue may severely limit the performance, but it can be easily solved either by memoization, or by generating all the coalitions using a tree visit.
An efficient iterative implementation of the latter is sketched in Algorithm~\ref{algo:queueqPartExplain}, and it is conceptually equivalent to the Partition Explainer of SHAP\,\cite{shapPartitionExplainer}.
Therefore it does not constitute a novel paper contribution, but we report it for reader's convenience and self-containment. 

\LinesNumbered
\begin{algorithm}[t]
\caption{Iterative implementation of Eq.~\eqref{eq:BPT}.}
\label{algo:queueqPartExplain}
\Function{\FuncCall{OwenValues}{$\nu$, $\TT$, $b$}} 
{
    \KwSty{foreach} $i \in \setN$ ~~\KwSty{do}~~
        $\Omega[i] \gets 0$ \;

    $\mathit{queue}$.push$\bigl(
        \tuple{1, \varnothing, \TT, \nu(\varnothing), \nu(\setN)}
        \bigr)$ \;

    \While{$\mathit{queue}$ is not empty} {
      $w, Q, T, v_Q, v_{Q \cup T} \gets \mathit{queue}.\text{pop}()$ \;
      \If{$T$ is indivisible or $b \leq 1$} {
        \ForEach{$i \in T$}{
          $\Omega[i] \gets 
                \Omega[i] +
                    \frac{w}{|T|}\bigl( v_{Q \cup T} - v_Q \bigr)$ \;
        }
      }
      \Else {
          $T_1, T_2 \gets T{\downarrow}$ \;
          $v_{Q \cup T_1} \gets \nu(Q \cup T_1)$ \;
          $v_{Q \cup T_2} \gets \nu(Q \cup T_2)$ \;
          $b \gets b-2$ \label{alg:nueval} \;
          $\begin{array}{ll}
            \!\!\!\!\mathit{queue}.\mathrm{push}\big(\!\!\!\! &
             \tuple{\tfrac{w}{2}, Q, T_1, v_Q, v_{Q \cup T_1}},~~         \tuple{\tfrac{w}{2}, Q\cup T_2, T_1, v_{Q \cup T_2}, v_{Q \cup T}}, \\
             &\tuple{\tfrac{w}{2}, Q, T_2, v_Q, v_{Q \cup T_2}},~~ 
             \tuple{\tfrac{w}{2}, Q\cup T_1, T_2, v_{Q \cup T_1}, v_{Q \cup T}} ~~\big)
          \end{array}$ \;
      }
    }
    \Return{$\Omega$}
}
\end{algorithm}

Algorithm~\ref{algo:queueqPartExplain} operates at the partition level. 
It starts from the full coalition at the root $\TT$ of the BPT hierarchy (measuring the difference $\nu(\setN) - \nu(\varnothing)$). 
Partitions are inserted into a queue, assumed to be ordered by a priority $w$.
It then proceeds by splitting the next partition with the highest $w$, using Eq.\,\eqref{eq:BPT}.
Each split requires two model evaluations (line \ref{alg:nueval}), thus reducing the budget $b$ by $2$.
The splitting continues until the budget $b$ is consumed, or all partitions left are indivisible.

\LinesNumbered
\begin{algorithm}[ht]
\caption{Pseudo-code of the BPT algorithm.}
\label{algo:BPT}
\begin{small}
\Function{\FuncCall{init\_bpt}{$\mathcal{X}$:image}} 
{
    \ForEach{pixel $px$ of image $x$}{
      $i \gets \FuncCall{make\_partition}{}$ \;
      $\mathit{minR}[i] \gets \mathit{maxR}[i] \gets \mathit{R}[px]$ \;
      $\mathit{minG}[i] \gets \mathit{maxG}[i] \gets \mathit{G}[px]$ \;
      $\mathit{minB}[i] \gets \mathit{maxB}[i] \gets \mathit{B}[px]$ \;
      $\mathit{area}[i] \gets 1$;~
      $\mathit{perimeter}[i] \gets 4$;~
      $\mathit{root}[i] \gets i $ \;
    }
    \ForEach{pair of partitions $i,j$ that have adjacent pixels in $x$}{
      \FuncCall{heap\_push}{$heap$,
        \FuncCall{make\_adjacency}{$i$, $j$, weight=\FuncCall{get\_dist}{$i$, $j$}} }
    }
}
\algrule
\setcounter{AlgoLine}{0}
\Function{\FuncCall{get\_dist}{$i$, $j$}} 
{
  $\mathit{rangeR} \gets \max(\mathit{maxR}[i] - \mathit{maxR}[j]) - \min(\mathit{minR}[i] - \mathit{minR}[j])$ \;
$\mathit{rangeG} \gets \max(\mathit{maxG}[i] - \mathit{maxG}[j]) - \min(\mathit{minG}[i] - \mathit{minG}[j])$ \;
$\mathit{rangeB} \gets \max(\mathit{maxB}[i] - \mathit{maxB}[j]) - \min(\mathit{minB}[i] - \mathit{minB}[j])$ \;
  $\mathit{area} \gets \mathit{area}[i] + \mathit{area}[j]$ \;

  $\mathit{perimeter} \gets \mathit{perimeter}[i] + \mathit{perimeter}[j] -  2*\mathit{adjacent\_perimeter[i,j]}$ \;

  $\mathit{color\_score} \gets (\mathit{rangeR}^2 + \mathit{rangeG}^2 + \mathit{rangeB}^2)$ \;
  \Return{$\mathit{color\_score} * \mathit{area} * \sqrt{\mathit{perimeter}}$}
}
\algrule
\setcounter{AlgoLine}{0}
\Function{\FuncCall{build\_bpt}{}} 
{
  \While{$heap$ is not empty} {
    $adj \gets \FuncCall{heap\_pop}{heap}$ \;
    $i, j \gets \text{partitions in } adj$ \; 
    $k \gets \FuncCall{make\_partition}{}$ \;
    $\mathit{minR}[k] \gets \min(\mathit{minR}[i], \mathit{minR}[j])$ \; $\mathit{maxR}[k] \gets \max(\mathit{maxR}[i], \mathit{maxR}[j])$ \;
    $\mathit{minG}[k] \gets \min(\mathit{minG}[i], \mathit{minG}[j])$ \; $\mathit{maxG}[k] \gets \max(\mathit{maxG}[i], \mathit{maxG}[j])$ \;
    $\mathit{minB}[k] \gets \min(\mathit{minB}[i], \mathit{minB}[j])$ \; $\mathit{maxB}[k] \gets \max(\mathit{maxB}[i], \mathit{maxB}[j])$ \;
    $\mathit{area}[k] \gets \mathit{area}[i] + \mathit{area}[j]$ \;
    $\mathit{perimeter}[k] \gets \mathit{perimeter}[i] + \mathit{perimeter}[j]$ \;
    $\mathit{root}[k] \gets k$ ;~~
    $\mathit{root}[i] \gets \mathit{root}[j] \gets k$ \;
    $\mathit{left\_branch}[k] \gets i$ ;~~
    $\mathit{right\_branch}[k] \gets j$ \;
    merge linked lists of adjacencies of $i$ and $j$ into a single linked list for partition $k$, updating the heap weights using \FuncSty{get\_dist} since partitions $i$ and $j$ are now merged together. \label{line:mergeAdj}

  }
}
\end{small}
\end{algorithm}


\subsection{Pseudo-code of the BPT algorithm}
\label{app:pseudoCodeBPT}

Detailed pseudo-code for the BPT algorithm can be found in \cite{salembier2000BPT,randrianasoa2018multiFeatureBPT,randrianasoa2021agatBPT}, but a pseudo-code is provided in Algorithm~\ref{algo:BPT}.
It uses three functions:
\begin{itemize}
    \item \FuncSty{init\_bpt}: initializes the unitary partitions $i$ of the BPT hierarchy from the individual pixels $px$ of the input image $x$, and creates the heap of all the pairs of adjacent pixels.

    \item \FuncSty{get\_dist}: computes the distance between two (adjacent) partitions $i$ and $j$ using Eq.\,\eqref{eq:bpt:distance}.

    \item \FuncSty{build\_bpt}: iteratively merges adjacent partitions in \emph{distance}-order, each time creating a new merged partition $k$, and updates the weights in the heap accordingly. The function proceeds as long as there are adjacent partitions, i.e. it stops when all pixels are merged into a single root partition.
\end{itemize}

Once Algorithm~\ref{algo:BPT} has generated a \emph{merging sequence}, it can be efficiently stored into 6 arrays:
\begin{itemize}
  \item $\mathit{leaf\_idx}[i]$: the image pixel of unitary coalition $i$, with $i \in [1, n]$;
  \item $\mathit{left\_branch}[k]$ and $\mathit{right\_branch}[k]$: the two partition indexes resulting from the split $T_k{\downarrow}$ of each non-unitary coalition $k$, with $k \in [n+1, 2n-1]$;
  \item $\mathit{start}[k]$ and $\mathit{end}[k]$: index interval of pixels for the non-unitary partition $k$;
  \item $\mathit{pixels}$: the sorted array of pixel indexes, indexed by $\mathit{start}$ and $\mathit{end}$.
\end{itemize}
Therefore, the memory requirement for the BPT hierarchy is $\Theta(6n)$ integers.

The core data structure is a graph of the partitions (nodes), paired with the list of adjacencies (edges). The adjacency list needs to be sorted efficiently in order to extract the edge $\mathit{adj}=(i,j)$ having the smallest $dist(i,j)$, as defined by Eq.\,\eqref{eq:bpt:distance} and computed by function \FuncSty{get\_dist}.
To do so, a heap data structure is a reasonable choice.
Merging coalitions therefore requires to both modify the nodes and update the edges.
This process, described at line \ref{line:mergeAdj} of \FuncSty{build\_bpt} and depicted in Figure~\ref{fig:BPT}/B, shows that each merge operation requires to traverse the adjacency list of the merged partitions.
Further details can be found in \cite{randrianasoa2018multiFeatureBPT}.


\subsection{Python implementation}

\emph{ShapBPT} is implemented in Python.
A snippet of the python code using the \emph{ShapBPT} package to obtain a Shapley explanation for a given image using the masking function $\nu$ is provided in Algorithm~\ref{algo:shapBPTpythonCode}. 
While not detailed in the paper, the implementation supports multi-class explanations, similarly to \cite{shapPartitionExplainer}.

\LinesNumbered
\begin{algorithm}[ht]
\caption{Example Python code.}
\label{algo:shapBPTpythonCode}
\KwSty{from} shap\_bpt \KwSty{import} \textit{Explainer} \;
explainer = \textit{Explainer}($\nu$, image\_to\_explain, num\_explained\_classes) \;
shap\_values = explainer.\textit{explain\_instance}(max\_evals=$b$)
\end{algorithm}


\subsection{Sensitivity of the distance function}
\label{app:dist_sensitivity}

We run a small sensitivity analysis of the distance function
$\mathit{dist}(T_i,T_j)$ over 100 randomly sampled images from
ImageNet‐S$_{50}$.  Unless otherwise stated, all experiments use the
same \textit{ResNet-50} classifier, a fixed evaluation budget $b$ of 100
model calls per image and the ShapBPT hyper-parameters reported
in the main text.

We consider three variations of the distance function.
\begin{itemize}
    \item \emph{Default} (\emph{Eq.\,\eqref{eq:bpt:distance}}) - color\,$\times$\,area\,$\times\sqrt{\text{perimeter}}$.
    \item \emph{No-perimeter} - color\,$\times$area (drops the perimeter term).
    \item \emph{No-color} - area\,$\times\,\sqrt{\text{perimeter}}$ (drops the color term).
\end{itemize}
Area cannot be dropped, as it generates imbalanced trees.
We report the \textit{relative} change (\(\Delta\%\)) in $\AUIoU$ and $\maxIoU$ against the default distance.

\begin{center}
  \begin{tabular}{lrr}
    \hline
    \textbf{Distance variant} & $\Delta\AUIoU$ $\uparrow$ & $\Delta\maxIoU$ $\downarrow$ \\
    \hline
    Default (Eq.\,\eqref{eq:bpt:distance})
        & 0.0\% 		& 0.0\% \\
    No-perimeter term
        & $-2.65$\%		& $-3.78$\% \\
    No-color term
        & $-12.61$\%	& $-24.40$\% \\
    \hline
  \end{tabular}    
\end{center}

Dropping the perimeter term produces a small loss in $\AUIoU$ of $-2.65$\% and a small loss in $\maxIoU$ of $-3.78$\%, showing that the presence of the perimeter term provides a benefit.
Dropping the color term results in significant losses ($-12.61$\% in $\AUIoU$ and $-24.40$\% in $\maxIoU$), which shows that color term is very relevant.
Therefore, the default color-area-perimeter distance of Eq.\,\eqref{eq:bpt:distance} is a well-behaved compromise: inexpensive to compute and close to the Pareto front.

We plan to study more complex alternatives in a future work.

\clearpage
\subsection{Evaluation details}

\subsubsection{Experiment E1}
\label{app:Experiment-E1}

This experiment employs the \emph{1K-V2} pretrained model\,\cite{vryniotis2021train}, utilizing the ResNet50 architecture\,\cite{resnet50} available in the PyTorch library, which reports an accuracy of $80.858\%$. Masking is applied by substituting affected pixels with a uniform gray color. The analysis is conducted on the ImageNet-S$_{50}$ dataset\,\cite{gao2022lussImageNetS}, which provides precise ground-truth masks for a selected subset of images. To maintain consistency, only images for which the ground-truth mask corresponds to the top predicted class are considered, resulting in a total of 574 images.

\begin{figure}[H]
    \centering
    \includegraphics[width=1.0\linewidth]{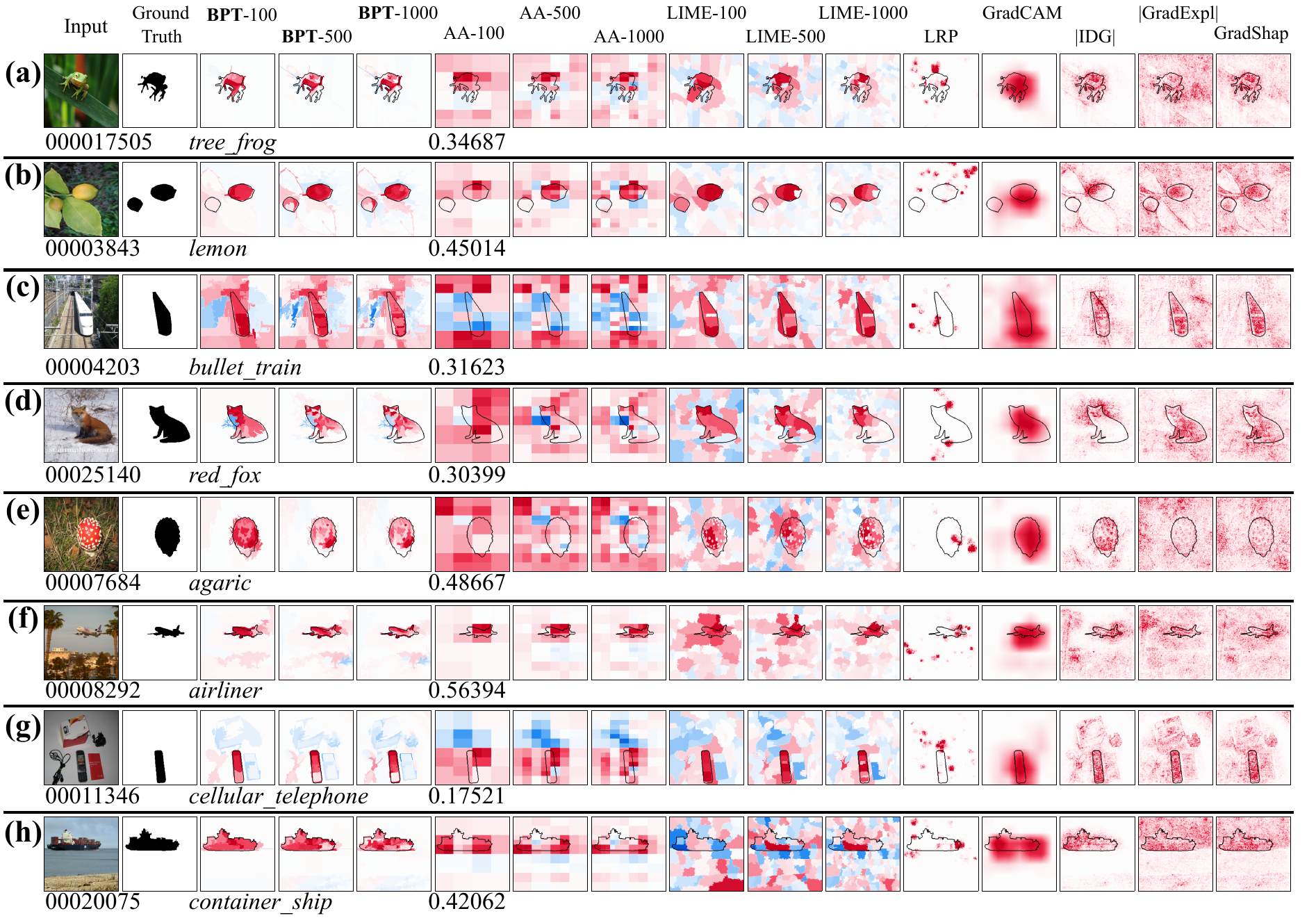}
    \caption{Additional saliency maps generated for the \textbf{E1} experiment.}
    \label{fig:saliencyMapsExt-E1}
\end{figure}

Figure\,\ref{fig:saliencyMapsExt-E1} shows additional saliency maps for the \textbf{E1} experiment, generated by explaining the classification of the ResNet50 model on the samples from the ImageNet-S$_{50}$ dataset.

Figure~\ref{fig:resultsE1} reports the results for \textbf{E1}, with one table for each of the four scores, plus one for the evaluation time (logscale).
All reported times were computed with an Intel Core i9 CPU, an Nvidia 4070 GPU, and 16GB of RAM.
Scores are drawn as boxplots (treating values outside 10 times the interquantile range as outliers, drawn as fuchsia dots), with a method symbol on the right (see the legend for the mapping).

\begin{figure}[h]
    \centering
    \includegraphics[width=1.0\linewidth]{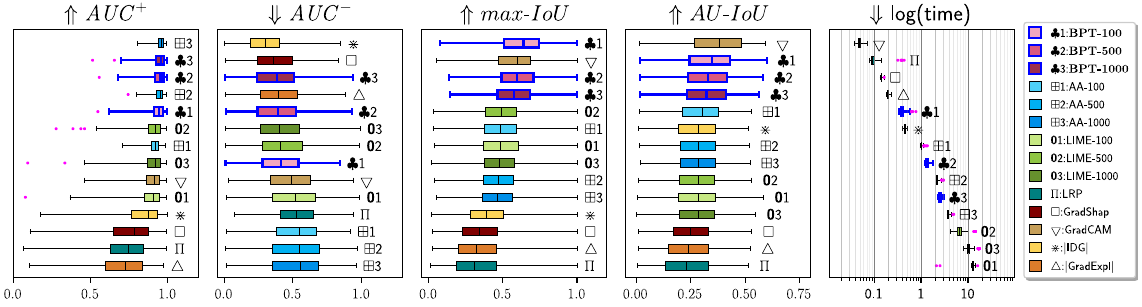}
    \caption{Results for four metrics across 574 images from the ImageNet-S$_{50}$ dataset, with methods ranked by performance (highest at the top) for experiment E1. Arrows denote whether higher or lower scores are better.}
    \label{fig:resultsE1}
\end{figure}

In \textbf{E1}, BPT is positioned close or at the top of every score.
In this case, AA has a slightly better $\mathit{AUC}^{+}$ score, but a worse $\mathit{AUC}^{-}$ score than BPT.
The BPT method seems to be particularly effective at the IoU scores \emph{max-IoU} and \emph{AU-IoU}, which can be explained by its capacity of recognizing the borders of the objects, by following a data-aware hierarchy.
Only GradCAM reaches similar IoU scores, but in practice the localization of GradCAM is more blurred and fuzzy (this limitation is apparently not well captured by the two IoU scores).

\subsubsection{Experiment E2}
\label{app:additionalExperiment-E2}
One important limitation of experiments relying on some unknown black-box model is that the ground truth may not be faithful, as the model may classify an object based on partial details or using weak correlations.
To overcome this limitation, experiment \textbf{E2} replicates \textbf{E1} adopting an ideal model which perfectly follows the ground truth.

The ideal model
\begin{equation}
    \nu_\text{lin}(S) = \tfrac{|S \cap G|}{|G|}
\end{equation}
is a linear function that outputs the proportion of pixels of $S$ that belong to the ground truth $G$.
Since $\nu_\text{lin}$ is not a neural network, CAM methods cannot be used and are excluded.
By using a linear model, the experimental environment has minimal noise, is therefore simpler to interpret, and provides a better baseline for assessment, even if it is less realistic than a deep learning model.

\begin{figure}[H]
    \centering
    \includegraphics[width=1.0\linewidth]{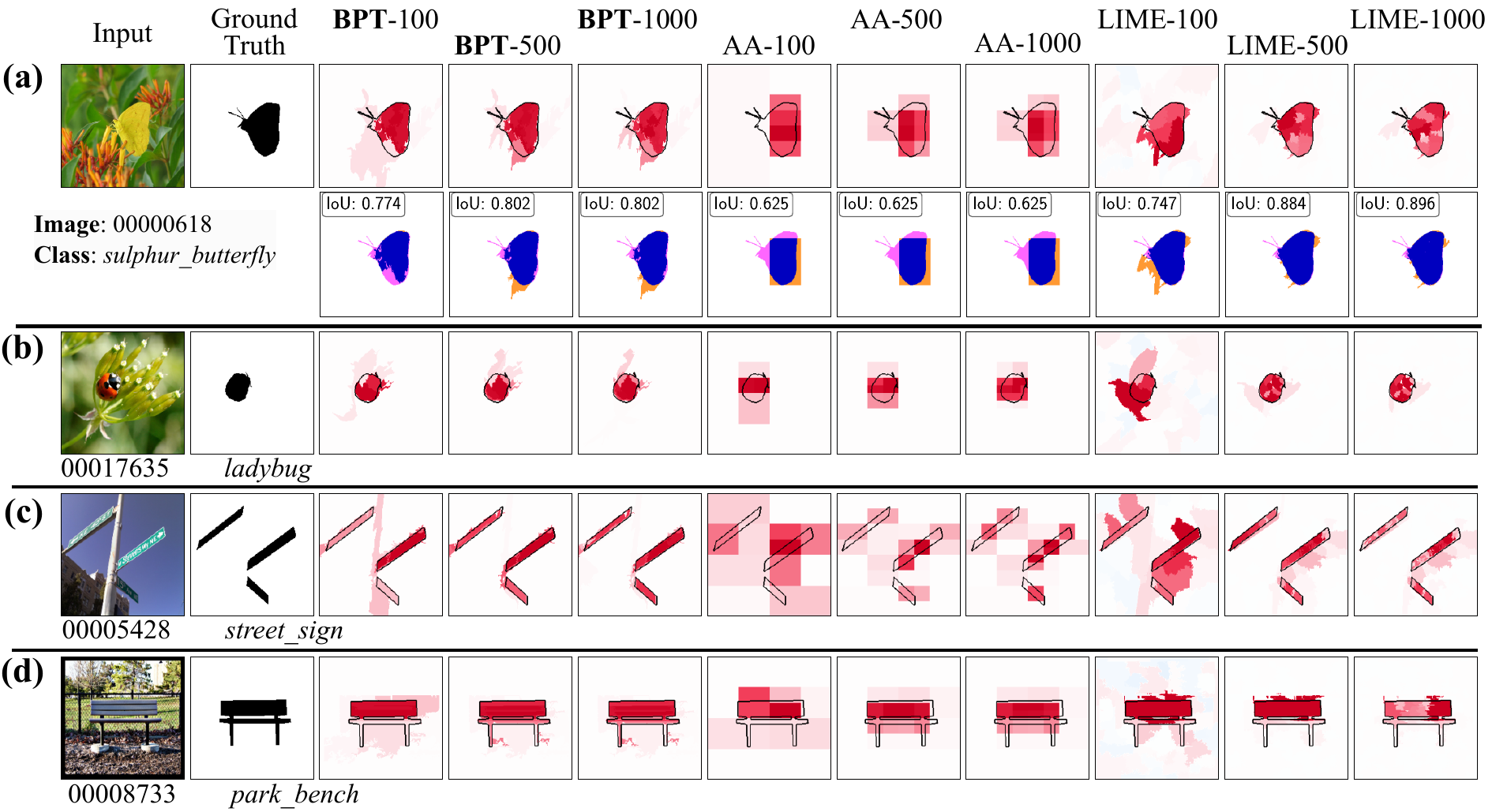}
    \caption{Saliency maps obtained from the ideal linear model $\nu_\text{lin}$, experiment \textbf{E2}.}
    \label{fig:Exp_E2}
\end{figure}

Figure \ref{fig:resultsE2} shows the results of experiment \textbf{E2}, while a subset of the generated saliency maps are depicted in Figure\,\ref{fig:Exp_E2}.
The results shows the effectiveness of the BPT explanation strategy: all BPT-$b$ achieve better scores that their AA-$b$ counterpart, for the same budget $b$.

\begin{figure}[H]
    \centering
    \includegraphics[width=1.0\linewidth]{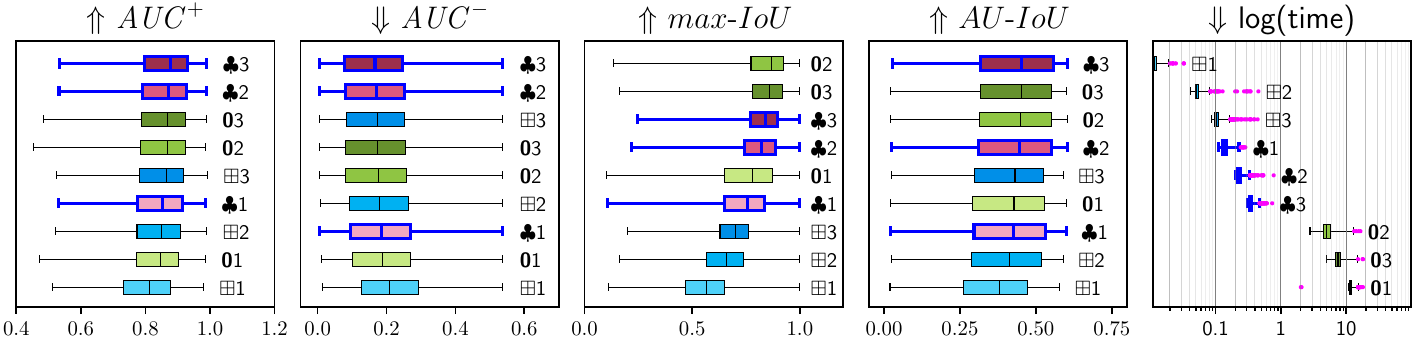}
    \caption{Results for the four metrics across 574 images from the ImageNet-S$_{50}$ dataset, with methods ranked by performance (highest at the top) for the experiments \textbf{E2}.} 
    \label{fig:resultsE2}
\end{figure}


\subsubsection{Experiment E3}
\label{app:additionalExperiment-E3}

Experiment \textbf{E3} replicates the setup of \textbf{E1} and \textbf{E2}, but employs a Vision Transformer model, specifically \emph{SwinViT}\,\cite{liu2021swin}. 
Vision Transformer models are known for their robustness against partial occlusion of recognized objects, making it more challenging for model-agnostic methods to analyze their behavior by selectively masking parts of the image. A summary of the results is presented in Figure \ref{fig:results_table_swin_trans_vit_real_gray}, while a selection of saliency maps from the same set of examples is depicted in Figure \ref{fig:heatmaps_E3_Swin_ViT}.

Due to the limitations of the LRP method's implementation, which does not support this transformer-based architecture, we excluded it from the results. Analyzing the explanations produced by different methods, it is evident that all approaches, except for BPT, generate significantly more ambiguous saliency maps, attributing considerable importance to background features while failing to focus adequately on the classified objects. In contrast, the maps produced by BPT appear clearer and more focused. Notably, BPT consistently achieves superior performance across all evaluation metrics.

\begin{figure}[H]
    \centering
    \includegraphics[width=\linewidth]{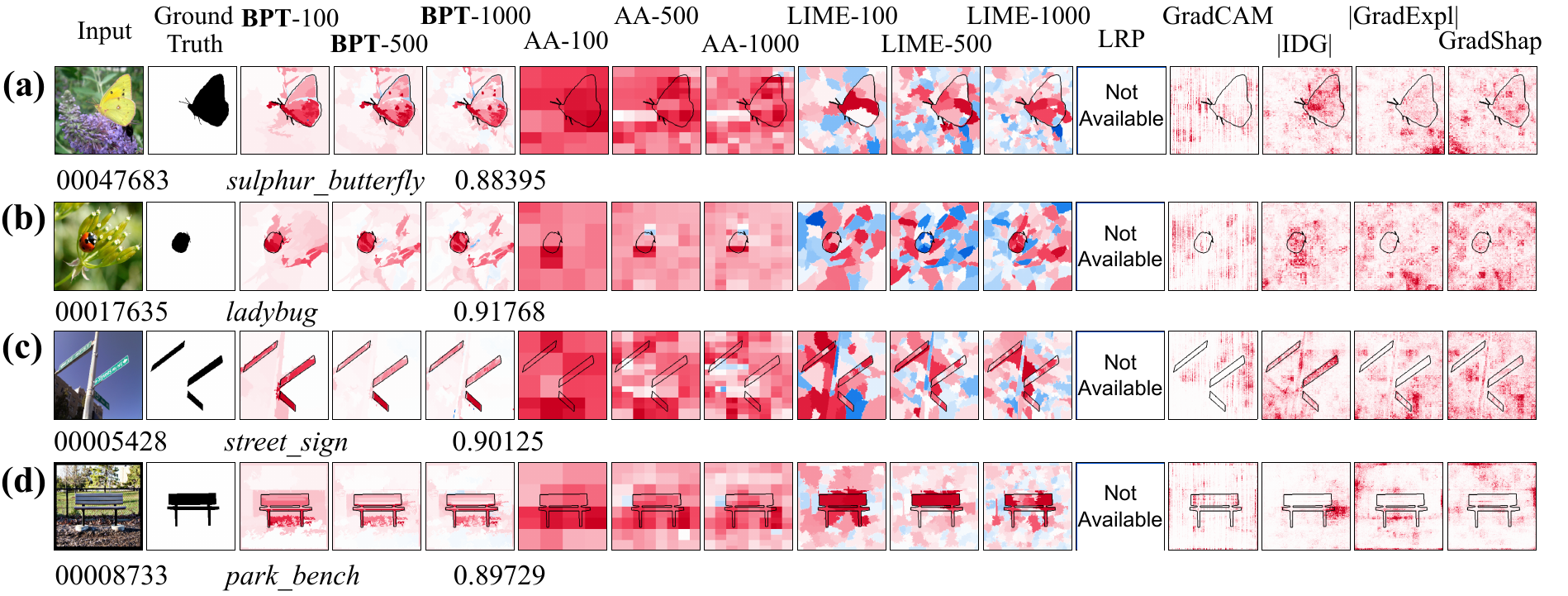}
    \caption{Saliency maps from selected instances in the \textbf{E3} experiment (with SwinViT).}
    \label{fig:heatmaps_E3_Swin_ViT}
\end{figure}

This experiment provides valuable insights, as Vision Transformer models exhibit increased robustness to input masking, making them particularly challenging to interpret using model-agnostic methods. Unlike convolutional models, these transformers-based model require clever feature replacement techniques for behavior probing, and ShapBPT seems to be significantly better than the other methods.

\begin{figure}[H]
    \centering
    \includegraphics[width=\linewidth]{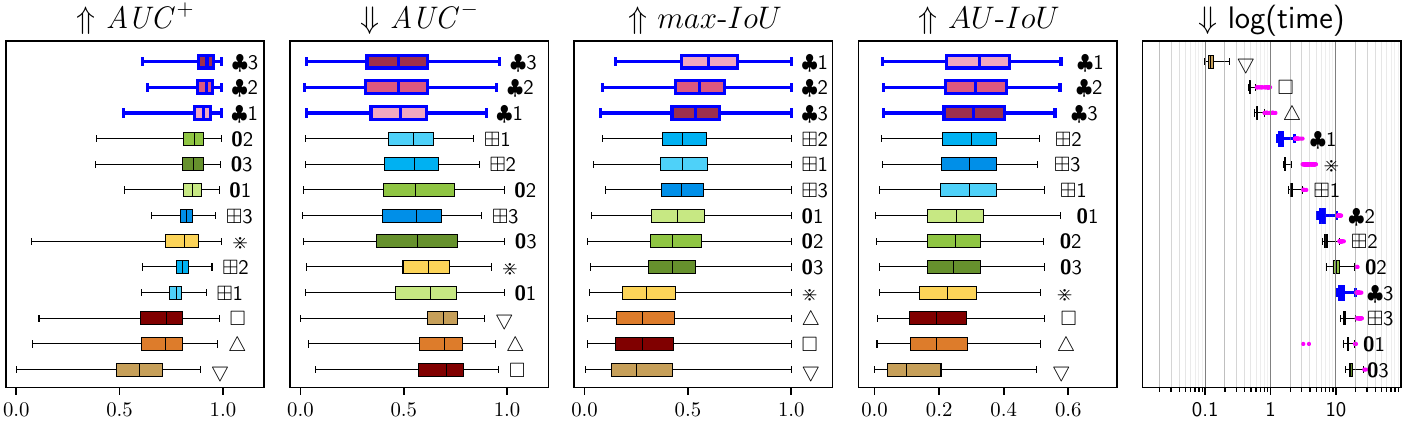}
    \caption{Results for the four metrics across 621 images from the ImageNet-S$_{50}$ dataset, with methods ranked by performance (highest at the top) for the experiments \textbf{E3}.}
    \label{fig:results_table_swin_trans_vit_real_gray}
\end{figure}


\subsubsection{Experiment E4}
\label{app:additionalExperiment-E4}

This experiment focuses on the MS-COCO dataset\,\cite{MSCOCO}, which includes 80 object classes with diverse image sizes and aspect ratios, and a wider range of details than ImageNet.
The task is object detection, evaluated on 5,000 images (from the validation set) with bounding box and segmentation map annotations.

\begin{figure}[H]
    \centering
    \includegraphics[width=0.90\linewidth]{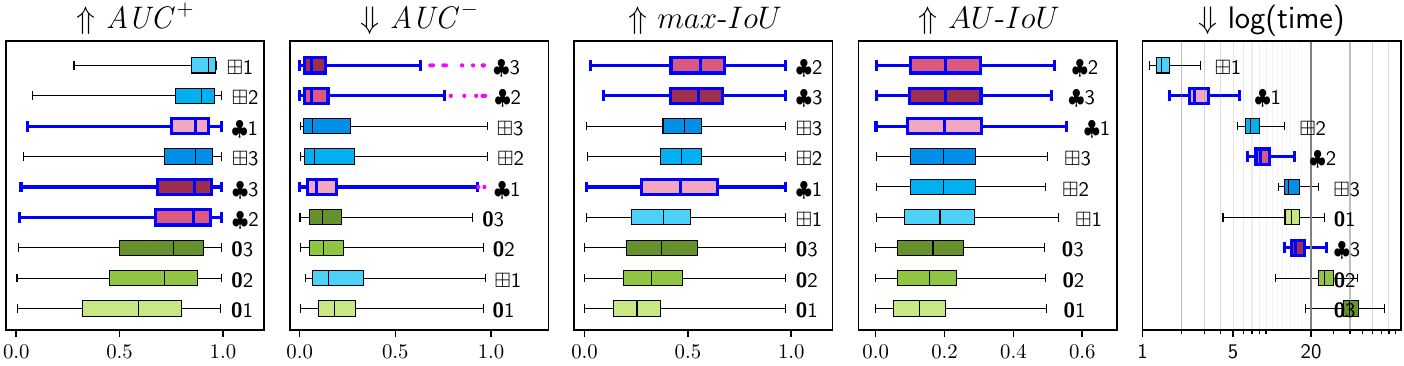}
    \caption{Results for the 274 test images from MS-COCO, with methods ranked by performance, for the experiments \textbf{E4}.}
    \label{fig:E4-results_MSCOCO}
\end{figure}
A pre-trained Yolo11s model\,\cite{yolo11ultralytics} from the Ultralytics library (319 layers, 9.4M parameters, 21.7 GFLOPs) is used as a black-box model for its accuracy and fast inference. The XCV task involves highlighting detected objects and comparing them to segmentation maps, with evaluation based on performance curve metrics and ground-truth comparisons, as explained in Experimental Assessment Section.

Also in this case, ShapBPT demonstrates strong capability in highlighting the boundaries of detected objects, outperforming AA in most metrics.

\begin{figure}[H]
    \centering
    \includegraphics[width=1.0\linewidth]{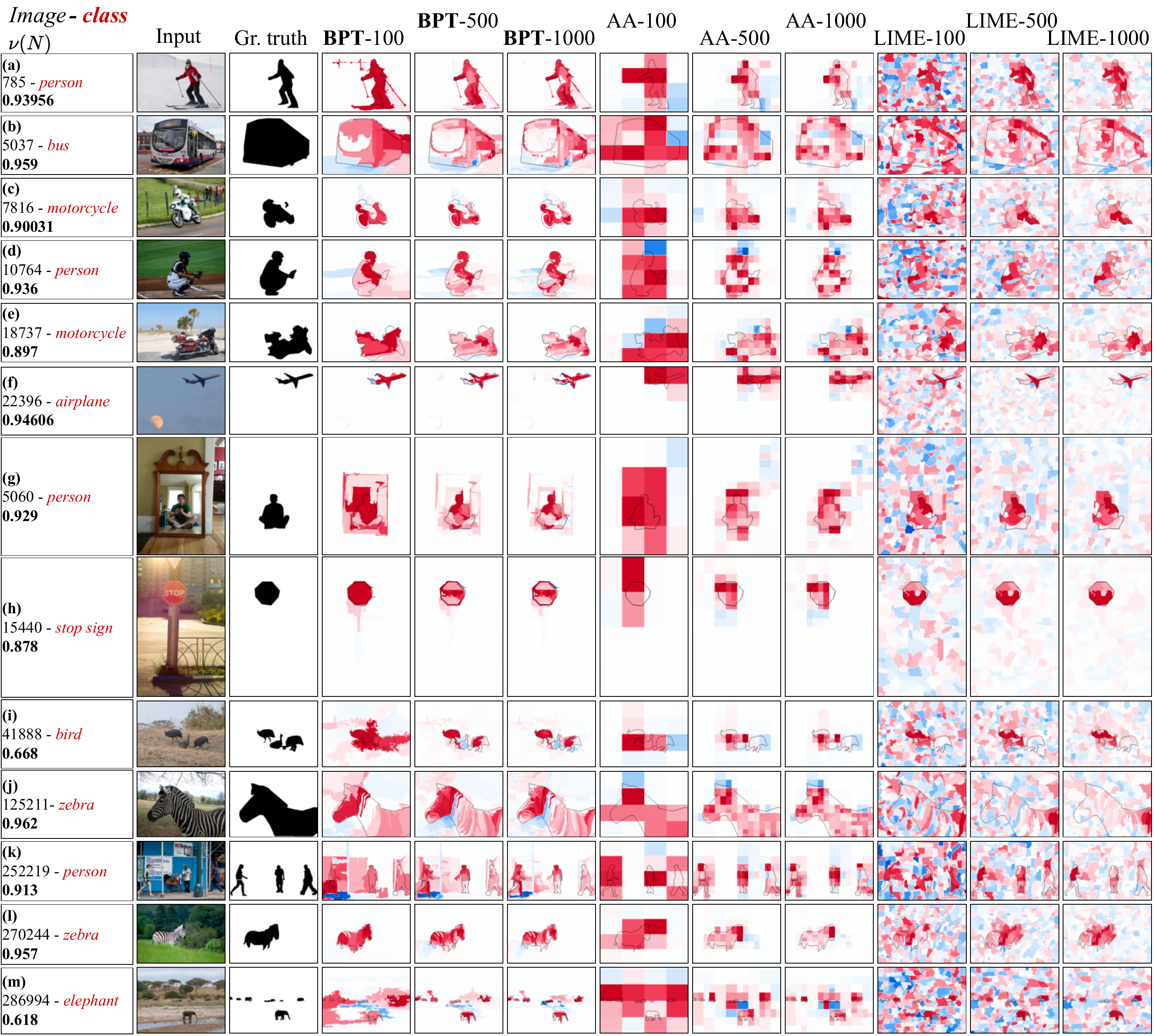}
    \caption{Examples of the \textbf{E4} experiment,  using the MS-COCO dataset.}
    \label{fig:E4-MS-COCO-examples}
\end{figure}

\subsubsection{Experiments E5}
\label{app:additionalExperiment-E5}

This experiment considers a multiclass regression model rather than a classification model. The objective is to determine the presence (positive prediction) or absence (negative prediction) of specific facial features, such as brown hair or eyeglasses. The explainable AI task involves identifying the regions that contribute to these predictions.
A score $\nu(\setN) > \nu(\varnothing)$ indicates the presence of the feature, while a score $\nu(\setN) < \nu(\varnothing)$ signifies its absence.

The dataset used for this study is CelebA-HQ \cite{karras2018progressive}, which includes 40 facial attributes. For this analysis, we focus on two attributes—brown hair and eyeglasses—for which ground-truth segmentation masks are available. A total of 106 images were evaluated. The model employed is a pre-trained sequential convolutional neural network (CNN) provided by \cite{Kartik}.

An example of the XCV task is illustrated in Figure \ref{fig:E5-CelebA-examples}, where multiple instances are analyzed. The first three are:
\begin{enumerate}
    \item A subject with brown hair, correctly identified as having brown hair (positive score).
    \item A subject with black hair, correctly identified as not having brown hair (negative score).
    \item A subject wearing eyeglasses, correctly identified as having them (positive score).
\end{enumerate}
For positive cases (a, c, d, and e), the Shapley values are positive in the regions contributing to the positive prediction. Conversely, for negative cases (b and f), the Shapley values are negative in the areas responsible for the negative prediction. 
Since CAM methods do not inherently satisfy the efficiency axiom of Shapley values, their outputs are considered in absolute terms.

\begin{figure}[H]
    \centering
    \includegraphics[width=\linewidth]{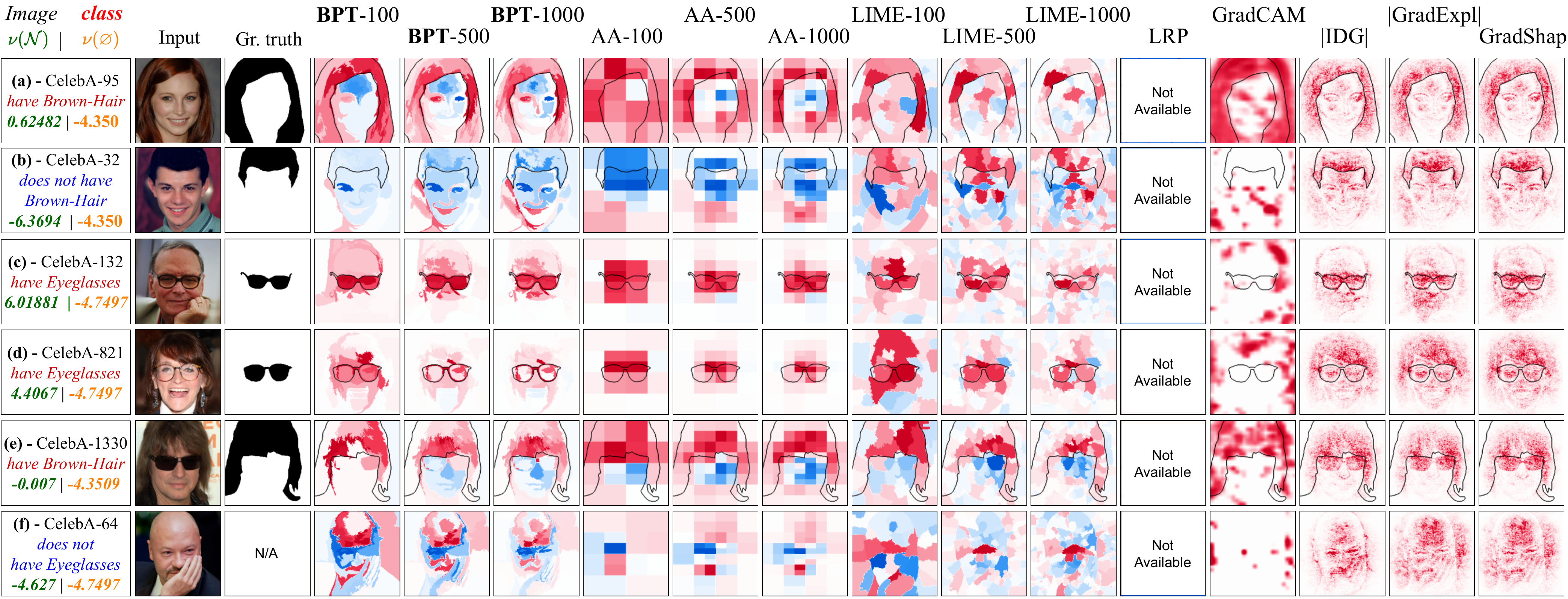}
    \caption{Examples of the \textbf{E5} experiment, explaining facial attributes using the CelebA dataset.}
    \label{fig:E5-CelebA-examples}
\end{figure}

\begin{figure}[H]
    \centering
    \includegraphics[width=1.0\linewidth]{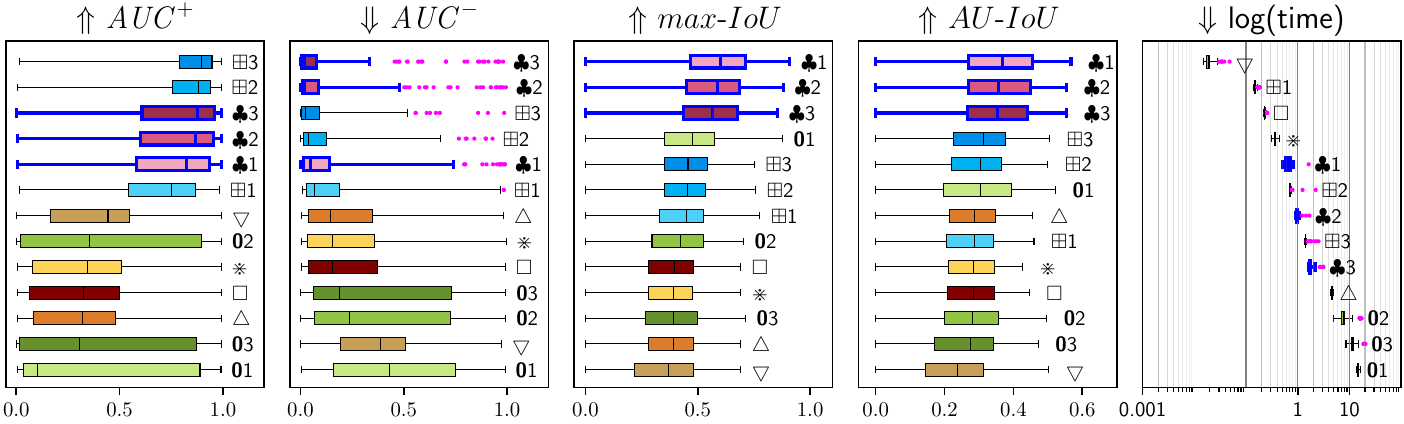}
    \caption{Results for the four metrics for the \textbf{E5} experiment on the CelebA dataset on 400 images.}
    \label{fig:E5-results_CelebAIoU}
\end{figure}

Results of the evaluation are reported in the tables in Figure\,\ref{fig:E5-results_CelebAIoU}. This experiment shows again the capacity of BPT-based methods to adaptively follow the borders of the activating regions, achieving high performances particularly on IoU scores.
Note that also in this case, as previously discussed for \textbf{E1}, the ground truth can only be considered as a weak approximation of the model's learnt representation, as the model is likely to use multiple features of the subject face to determine the presence or the absence of a specific attribute, not just the shape of the hair or the eyeglasses. Nonetheless, the localization of that area remains more precise when data-awareness is used.


\subsubsection{Experiments E6}
\label{app:additionalExperiment-E6}

Experiment \textbf{E6} considers the problem of explaining anomalies detected by an Anomaly Detection (AD) system on image data.
This experiment is based on the work of \cite{ravi2021general} where anomalies in images are detected using a Variational AutoEncoder-Generative Adversarial Network (VAE-GAN) model by means of anomaly localization. 
We use the MVTec benchmark dataset \cite{bergmann2019mvtec} which has 5000 high quality images with defective and non-defective samples from 15 different categories of objects. We selected the \textit{hazelnut} object category from the dataset.

\begin{figure}[H]
    \centering
    \includegraphics[width=0.70\linewidth]{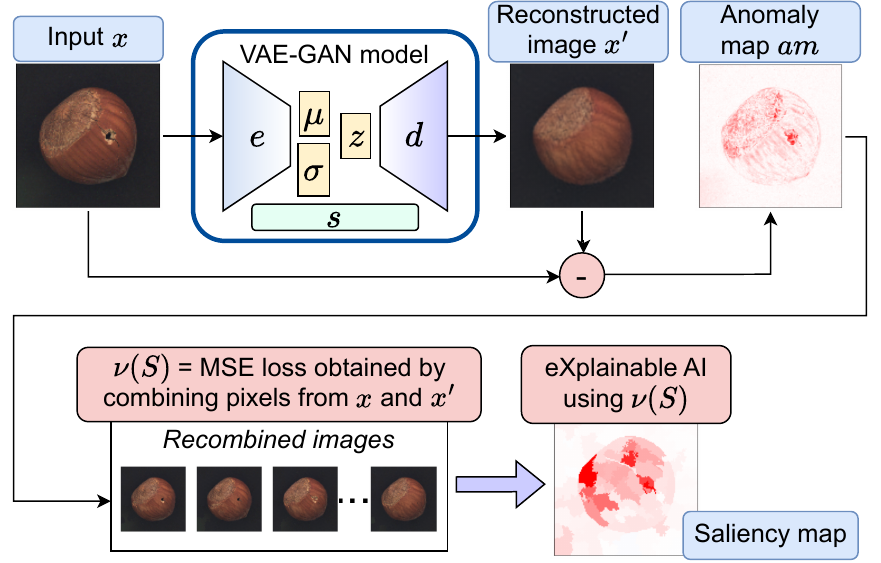}
    \caption{Workflow of the explainable AI applied to the Anomaly Detection system of \textbf{E6}.}
    \label{fig:AD}
\end{figure}

The pipeline of this system is depicted in Figure \ref{fig:AD}. An input image $x$ is reconstructed into $x'$ using a one-class VAE-GAN classifier. 

\begin{figure}[H]
    \centering
    
    \includegraphics[width=1.0\linewidth]{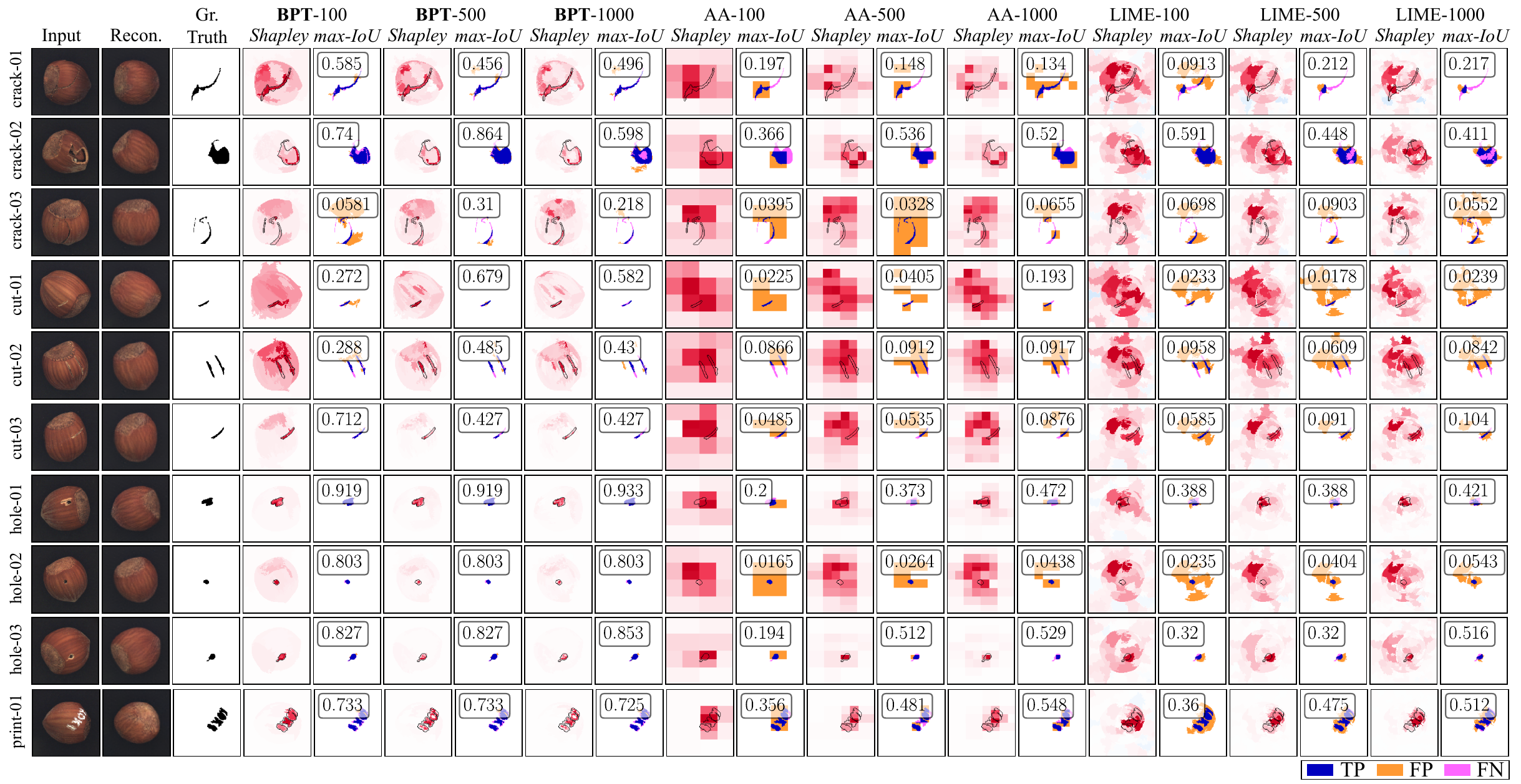}
    
    \caption{Selected examples in the Anomaly Detection system for experiment \textbf{E6}.}
    \label{app:fig_heatmaps-E6}
\end{figure}

The anomaly map $am$ captures the reconstruction error, which sums up both the potential anomalies of $x$ as well as the noise. An XAI method can be employed to separate the noise from the detected anomalies, thus localizing if and where the anomalies are present.
In this context, the function $\nu(S)$ is a MSE loss on the anomaly map $am$ itself. Since $\nu(S)$ is a MSE loss and not a neural network, CAM methods cannot be used.
Therefore, we generate saliency maps using BPT, AA and LIME.
We use values $100$, $500$, and $1000$ for the budget value $b$. 
For LIME, we use $100$, $500$ and $1000$ a-priori segments, respectively.

\begin{figure}[H]
    \centering
    \includegraphics[width=\linewidth]{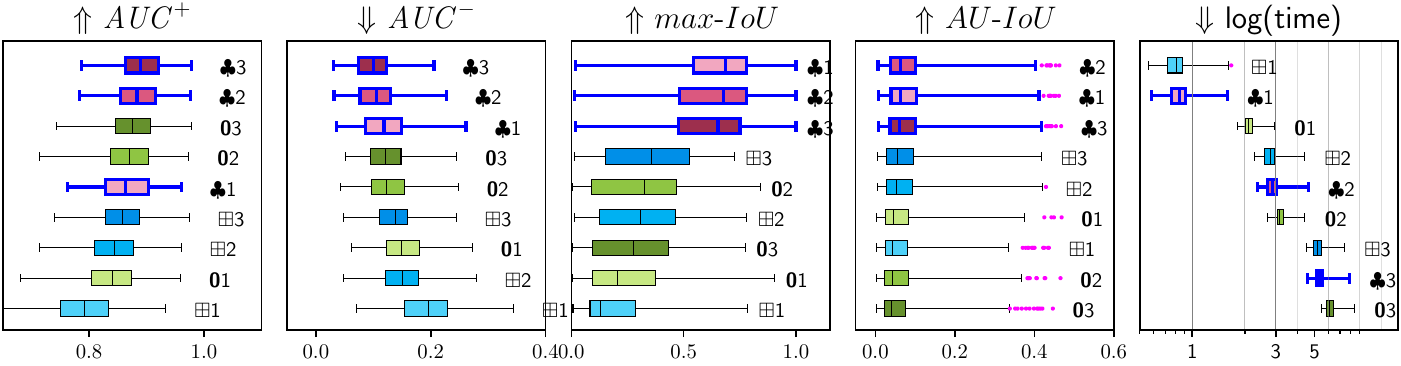}
    \caption{Results for the 4 metrics for the experiments \textbf{E6} on 280 images.}
    \label{fig:results-E6}
\end{figure}

As the MVTech dataset has proper ground truth masks for the expected anomalous regions, we can compute all the four scores defined in the Experimental Assessment Section. 

Figure \ref{app:fig_heatmaps-E6} shows the AD problem on three input images. For each input, a row shows: 
the input $x$, its reconstruction $x'$ through the VAE-GAN model, the anomaly map $am$, the explanation generated by BPT with $b{=}500$, by AA with $b{=}500$ and by LIME with $b{=}500$ and $100$ segments. The best intersection-over-union is also shown, highlighting the true positives (TP), the false positives (FP) and the false negatives (FN).
The ground truth $G$ is also shown, for reference.

Results are reported in Figure\,\ref{fig:results-E6}.
Again, all three XCV methods are capable of identifying the anomalous regions on the various samples, but BPT significantly outperforms the others. 
This is particularly true for the task of identifying the exact region, which is highlighted by the very high \textit{max-IoU} scores.

\subsubsection{Experiments E7}
\label{app:additionalExperiment-E7}

\begin{figure}[H]
    \centering
    \includegraphics[width=1.0\linewidth]{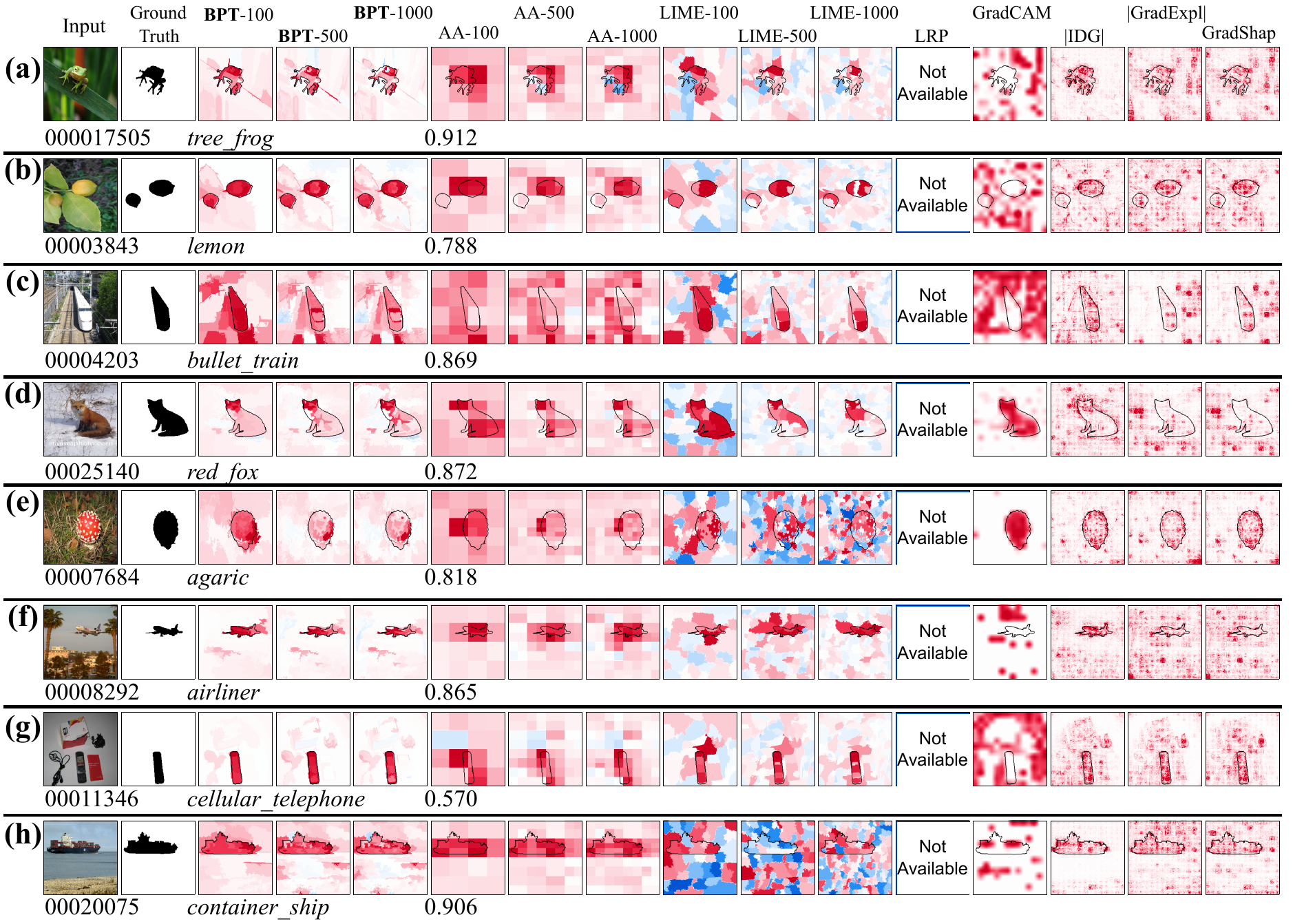}
    \caption{Saliency maps from selected instances in the \textbf{E7} experiment (with ViT-Base 16).}
    \label{fig:heatmaps_E7_ViT}
\end{figure}

Similar to E1 using the ViT-Base 16 model\,\cite{dosovitskiyimage}.
Saliency maps are illustrated in Figure\,\ref{fig:heatmaps_E7_ViT}.
Results are reported in Figure\,\ref{fig:results-E7}.
Similarly to E3 we observe a tendency of BPT to show more focused explanations than AA, and BPT achieves significantly better automated scores than AA.

\begin{figure}[H]
    \centering
    \includegraphics[width=\linewidth]{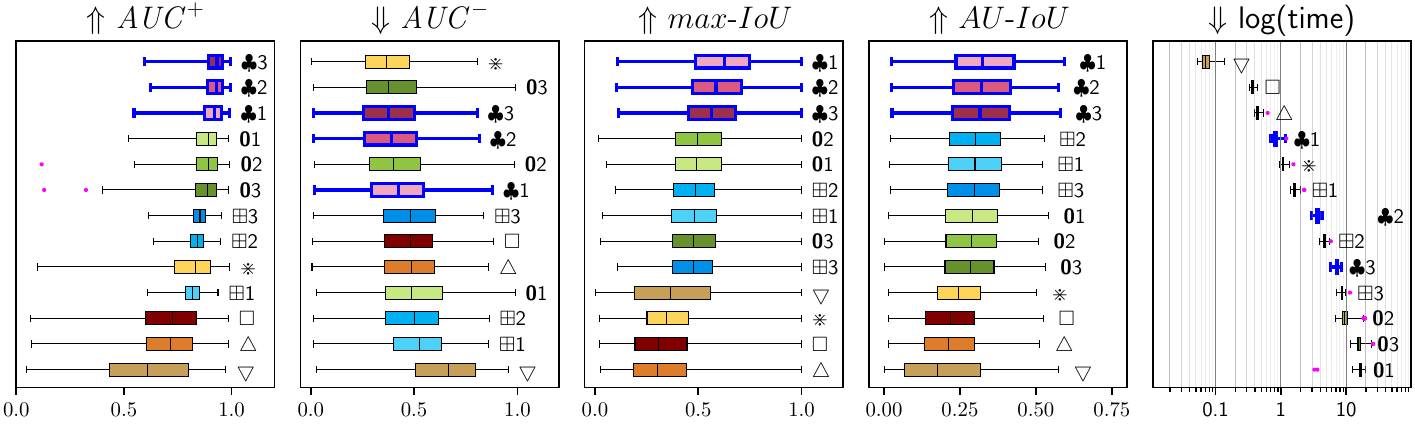}
    \caption{Results for the 4 metrics for the experiments \textbf{E7} on 593 images.}
    \label{fig:results-E7}
\end{figure}

\clearpage\clearpage



\subsubsection{E8 - User Preference Study}
\label{app:experiment-E8}

The study was conceived to quantify \emph{perceived intuitiveness and usefulness} of four explanation techniques:
\textbf{AA}-1000, \textbf{BPT}-1000, \textbf{LIME}-1000 and \textbf{GradCAM},
when applied to vision classification tasks of varying semantic granularity.
Because subjective preference is hard to measure on an absolute scale, we employed a \emph{within-subject, forced-ranking} design: each participant was shown four explanations side-by-side for a given image-classification task and asked to order them from most to least helpful. Saliency maps were permuted across tasks to mitigate position and label biases, while keeping the identity of each method hidden to every participant.
This design yields ordinal data that permit robust, non-parametric comparisons without assuming interval-scale judgments.

\begin{figure}[ht]
    \centering
    \includegraphics[width=0.5\linewidth]{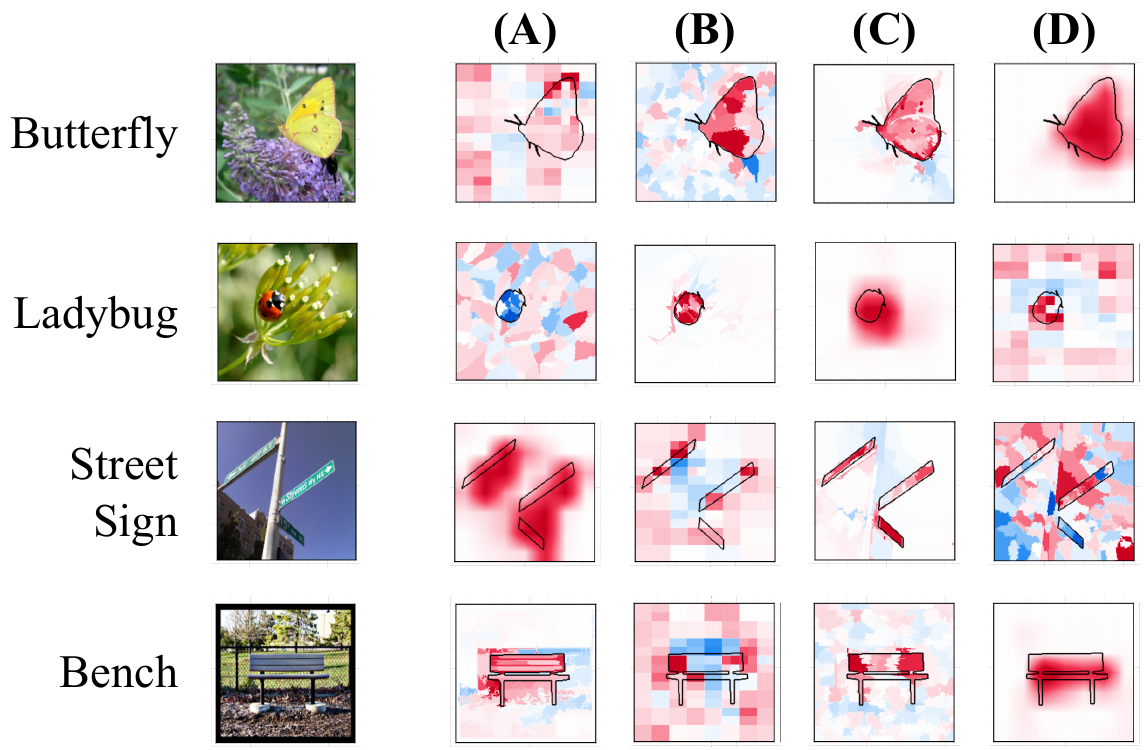}
    \caption{Sample structure of the questionnaire given to the 20 human subjects.}
    \label{fig:humanstudy}
\end{figure}

\paragraph{Experimental setting.}
Twenty volunteers (S1-S20) recruited from graduate students completed four ranking trials corresponding to the “\emph{Butterfly}”, “\emph{Ladybug}”, “\emph{StreetSign}”, and “\emph{Bench}” images. 
Recruited students have technical background in Machine Learning, but not in-depth knowledge of explainability and its associated methods.
Each trial presented the raw image and four colour-coded saliency maps produced by the hidden methods.  
Participants gave a total of $20 \times 4 = 80$ rank vectors and thus $80$ independent judgments for every method (see a sample in Figure\,\ref{fig:humanstudy}).  
After the ranking stage, subjects confirmed they understood the task.

\begin{figure}[ht]
    \centering
    \includegraphics[width=0.5\linewidth]{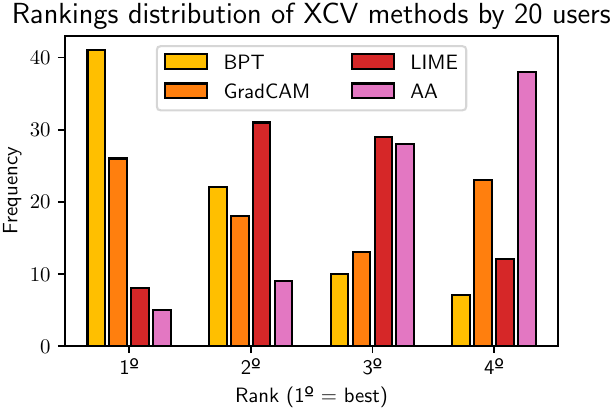}
    \caption{Distribution of the rankings across the four methods given by the 20 user subjects.}
    \label{fig:userrankings}
\end{figure}

\paragraph{Results and discussion.}
Figure\,\ref{fig:userrankings} summarises the outcome.  
BPT dominates the leftmost cluster, being selected first in $41/80 = 51\,\%$ of the trials and never relegated to fourth more than $7$ times; its mean rank is $\bar r_\text{BPT}=1.79$.  
GradCAM follows with a respectable first-place share ($33\,\%$) and $\bar r=2.41$; 
LIME is more evenly spread across the second and third positions ($\bar r=2.56$), while AA is strongly skewed to the bottom ($38/80$ fourth-place votes, $\bar r=3.24$).  

A Friedman test ($ \chi^2_{(k=3)} {=} 19.56,\; p{=}0.0002<0.05$, $H_0$ rejected) confirms that at least one explanation technique is ranked differently from the others with very high statistical confidence. 
To determine the significance, we follow with a Nemenyi pairwise comparison and identify BPT-vs-GradCAM and BPT-vs-LIME as marginally significant ($p{=}0.079$ and $p{=}0.092$, marginal at $p < \alpha {=} 0.10$),
and BPT-vs-AA as strongly significant ($p{=}0.000081$).
Taken together, the data argue that human users consistently find BPT explanations clearer and more intuitive, supporting its adoption as the default interpretability method in similar visual-classification pipelines.
Further discussions with the subject confirms the preference due to the clear and crisp presentation, with little noise and not blurred.
While results are statistically significant, we acknowledge that this user study is small and preliminary, and we will make a larger study as a future work.


\subsection{Limitations of SAM for Owen-style Shapley attribution.}

The \emph{Segment Anything Model} (SAM) provides, via its \emph{SamAutomaticMaskGenerator}, a flat set of object–proposal masks that are pruned only by \emph{box}-level non-maximum suppression, leaving many pixel overlaps and no parent–child ordering among the masks. 
As a result, the masks neither partition the image domain nor form a nested hierarchy \cite{kirillov2023segment}. 
These properties violate the disjoint-coalition and containment assumptions required by the Owen extension of Shapley values, whose recursive additivity hinges on a binary partition tree (BPT). 
Consequently, raw SAM outputs cannot be plugged directly into hierarchical-Shapley frameworks without additional structure.

One direction is the \emph{Explain Any Concept} (EAC), which couples SAM with Shapley values to attribute predictions to a \emph{flat} set of SAM-derived concept masks \cite{sun2023explain}. 
Because these masks may overlap and are treated independently, the Shapley computation is executed \emph{once} on the initial segmentation, with no mechanism for iterative refinement of coalitions. 
This design means EAC’s faithfulness is entirely dependent on the initial SAM proposals already capturing all semantically relevant regions (like LIME), thereby breaking requirement R2 of the ShapBPT framework (i.e. progressive, data-driven refinement of the coalition hierarchy). 
In scenarios where the first-pass segmentation misses fine-grained or contextually important regions, EAC cannot recover them, whereas ShapBPT’s recursive splits can adaptively hone in on such details.

A viable research direction toward a SAM-based Hierarchical Coalition Structure (HCS) is to post-process the SAM mask set into a non-overlapping, nested hierarchy that is compatible with the Owen recursion. 
One potential starting point follows the \emph{Panoptic-SAM} pipeline\footnote{Panoptic Segment Anything. \url{https://github.com/segments-ai/panoptic-segment-anything}}, which ``paints'' SAM masks from largest to smallest to obtain a panoptic segmentation; a region-adjacency graph could then be constructed and merged iteratively (e.g., by similarity or containment) to yield a balanced tree. 
Still the tree is not strictly binary, which either needs a binarization or requires a reformulation of \eqref{eq:BPT} to deal with $n$-ary trees.

We plan as a future work to define a SAM-HCS and test its effectiveness against a morphology-driven BPTs, to understand how effective it is w.r.t. a fully refinable BPT structure.

\subsection{Remarks on h-Shap}

We considered including \emph{h-Shap}\,\cite{teneggi2022hShap}, which has a faster convergence compared to Theorem~\ref{th:cost}. However, since the object recognition task addressed by \emph{h-Shap} is incomparable to that of the other XCV methods, as h-Shap treats binary-valued games only, we chose to exclude it. 
Despite this, we believe that \emph{h-Shap} would also benefit from the use of BPT partitions.


\subsection{ANOVA results}
\label{app:anova-results}

To assess statistical significance, we conducted one-way ANOVA tests for each of the four scores ($\mathit{AUC}^+$, $\mathit{AUC}^-$, $\AUIoU$ and $\maxIoU$) obtained by competing explainers across the full test split.
Scores are computed per image (one sample per image $\times$ explainer). We assume that independence is guaranteed since each image is a different problem.
We test the null hypothesis ($H_0$) of equal means across all sample populations. A $p$-value threshold of $0.05$ was used to determine significance, and $H_0$ is rejected when $p < 0.05$.
In all cases, the null hypothesis was rejected, indicating that the results are statistically significant. 
Table \ref{tab:anova_results} reports the results of the one-way ANOVA tests.

\begin{table*}[h]
\centering
\begin{tabular}{|c|c|c|c|c|c|c|c|}
\hline
\textbf{Experiment} & \textbf{No. of Methods} & \textbf{No. of Images} & \textbf{$\textbf{\textit{AUC}}^+$} & \textbf{$\textbf{\textit{AUC}}^-$} & \textbf{\textit{max-IoU}} & \textbf{\textit{AU-IoU}} \\
\hline
E1 & 14 & 574 & 0.0 & 4.10e-197 & 0.0 & 2.49e-135 \\
E2 & 12 & 574 & 0.0 & 0.0       & 0.0 & 0.0       \\
E3 & 14 & 621 & 0.0 & 4.56e-248 & 0.0 & 0.0       \\
E4 & 9  & 274 & 1.09e-111 & 1.54e-15 & 1.94e-96 & 1.29e-18 \\
E5 & 14 & 436 & 0.0 & 0.0       & 1.36e-31 & 1.64e-13 \\
E6 & 9  & 280 & 2.12e-145 & 4.03e-175 & 8.53e-260 & 0.009 \\
E7 & 13 & 593 &  0.0 & 1.27e-209 & 6.06e-162 & 0.0 \\

\hline
\end{tabular}
\caption{One-way ANOVA summary of all four metrics across the seven experiments.}
\label{tab:anova_results}
\end{table*}



\subsection{Convergence analysis}

All experiments presented were performed on a fixed budget of evaluations. This does not clarify the convergence rate of BPT w.r.t. simpler strategies such as AA.
We repeat experiment \textit{E2} (ideal model, 574 images) with varying budgets from 100 to 2000, in increments of 100 evaluations, and plot the progression of the average scores.
Results are shown in Fig. \ref{fig:convergence}. 

\begin{figure}[H]
    \centering
    \includegraphics[width=0.8\linewidth]{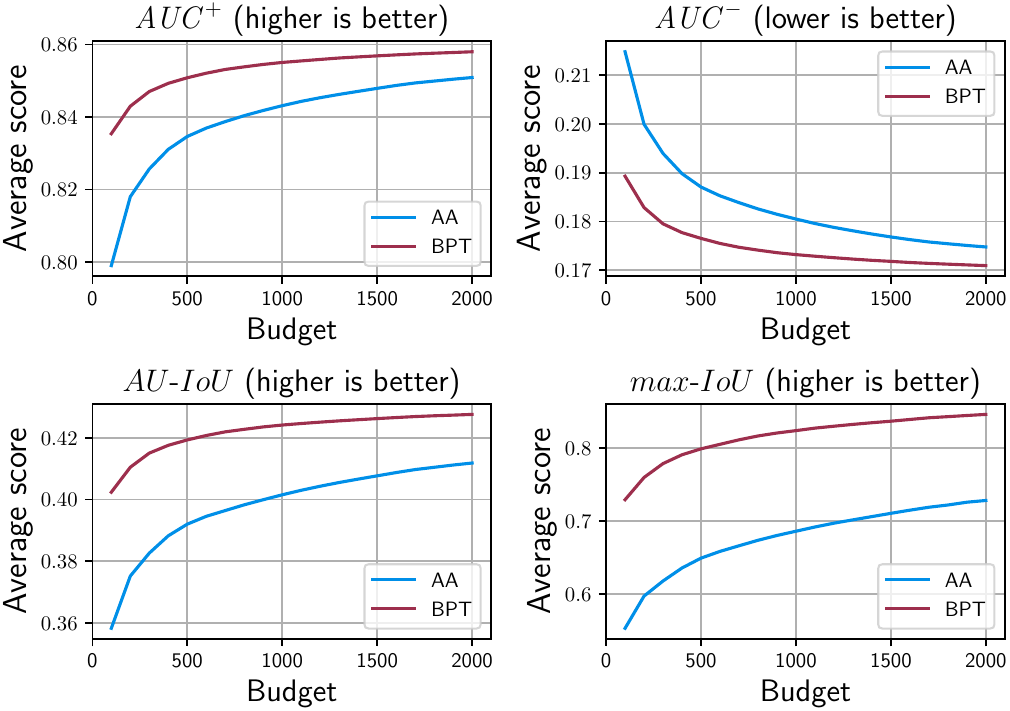}
    \caption{Results for convergence analysis}
    \label{fig:convergence}
\end{figure}

Plots (Fig.~\ref{fig:convergence}) confirm the general intuition of the paper: BPT performs as an accelerator for the Shapley coefficient computation. To achieve similar results with AA a much larger budget is needed.

%% file: references.bib
@article{shapley1953value,
  title={A value for n-person games},
  author={Shapley, Lloyd S},
  year={1953},
  journal={The Shapley value. Essays in honor of Lloyd S. Shapley},
  pages={31},
  publisher={Princeton University Press Princeton}
}

@inproceedings{owen1977values,
  title={Values of games with a priori unions},
  author={Owen, Guilliermo},
  booktitle={Mathematical economics and game theory: Essays in honor of Oskar Morgenstern},
  pages={76--88},
  year={1977},
  organization={Springer}
}

@book{owen2013book,
  title={Game theory, 4th Ed.},
  author={Owen, Guillermo},
  year={2013},
  publisher={Emerald Group Publishing}
}

@article{lopez2009relationship,
  title={{On the relationship between Shapley and Owen values}},
  author={L{\'o}pez, Susana and Saboya, Martha},
  journal={Central European Journal of Operations Research},
  volume={17},
  pages={415--423},
  year={2009},
  publisher={Springer}
}

@article{dubey1981semivalues,
  title={Value theory without efficiency},
  author={Dubey, Pradeep and Neyman, Abraham and Weber, Robert James},
  journal={Mathematics of Operations Research},
  volume={6},
  number={1},
  pages={122--128},
  year={1981},
  publisher={INFORMS}
}

@inproceedings{ijcai2022ShapleyInML,
  title     = {{The Shapley Value in Machine Learning}},
  author    = {Rozemberczki, Benedek and Watson, Lauren and Bayer, Péter and Yang, Hao-Tsung and Kiss, Olivér and Nilsson, Sebastian and Sarkar, Rik},
  booktitle = {{IJCAI-22}},
  pages     = {5572--5579},
  year      = {2022},
}

@inproceedings{lundberg2017unified,
    title = {A Unified Approach to Interpreting Model Predictions},
    author = {Lundberg, Scott M and Lee, Su-In},
    booktitle = {Advances in Neural Information Processing Systems 30},
    pages = {4765--4774},
    year = {2017},
}

@inproceedings{shrikumar2017DeepLIFT,
  title={Learning important features through propagating activation differences},
  author={Shrikumar, Avanti and Greenside, Peyton and Kundaje, Anshul},
  booktitle={International conference on machine learning},
  pages={3145--3153},
  year={2017},
  organization={PMLR}
}

@misc{shapPartitionExplainer,
  author = {Scott Lundberg},
  title = {{The SHAP Partition Explainer}},
  howpublished = "{\small\url{https://shap.readthedocs.io/en/latest/generated/shap.PartitionExplainer.html}}",
  note = {Accessed on 2025-Nov-28},
  year = {2020}, 
}

@article{randrianasoa2018multiFeatureBPT,
  title={Binary partition tree construction from multiple features for image segmentation},
  author={Randrianasoa, Jimmy Francky and Kurtz, Camille and Desjardin, Eric and Passat, Nicolas},
  journal={Pattern Recognition},
  volume={84},
  pages={237--250},
  year={2018},
  publisher={Elsevier}
}

@article{randrianasoa2021agatBPT,
  title={{AGAT: Building and evaluating binary partition trees for image segmentation}},
  author={Randrianasoa, Jimmy Francky and Kurtz, Camille and Desjardin, Eric and Passat, Nicolas},
  journal={SoftwareX},
  volume={16},
  pages={100855},
  year={2021},
  publisher={Elsevier}
}

@article{salembier2000BPT,
  title={Binary partition tree as an efficient representation for image processing, segmentation, and information retrieval},
  author={Salembier, Philippe and Garrido, Luis},
  journal={IEEE Trans. on Image Processing},
  volume={9},
  number={4},
  pages={561--576},
  year={2000},
  publisher={IEEE}
}

@article{teneggi2022hShap,
  title={Fast hierarchical games for image explanations},
  author={Teneggi, Jacopo and Luster, Alexandre and Sulam, Jeremias},
  journal={IEEE Transactions on Pattern Analysis and Machine Intelligence},
  volume={45},
  number={4},
  pages={4494--4503},
  year={2022},
  publisher={IEEE}
}

@inproceedings{ribeiro2016lime,
  title={{"Why should I trust you?" Explaining the predictions of any classifier}},
  author={Ribeiro, Marco Tulio and Singh, Sameer and Guestrin, Carlos},
  booktitle={Proc. ACM SIGKDD Int. Conf., 22nd},
  pages={1135--1144},
  year={2016}
}

@inproceedings{riseScores,
  author       = {Vitali Petsiuk and
                  Abir Das and
                  Kate Saenko},
  title        = {{RISE: Randomized Input Sampling for Explanation of Black-box Models}},
  booktitle    = {British Machine Vision Conference (BMVC) 2018},
  pages        = {151},
  publisher    = {{BMVA} Press},
  year         = {2018},
}

@misc{gradcampytorch,
  title={{PyTorch library for CAM methods}},
  author={Jacob Gildenblat and contributors},
  year={2021},
  publisher={GitHub},
  howpublished={{\small\url{https://github.com/jacobgil/pytorch-grad-cam}}},
  note = {Accessed on 2025-Nov-28},
}

@article{GANGOPADHYAY202364,
    title = {Benchmarking framework for anomaly localization: Towards real-world deployment of automated visual inspection},
    journal = {Journal of Manufacturing Systems},
    volume = {69},
    pages = {64-75},
    year = {2023},
    author = {Tryambak Gangopadhyay and Sungmin Hong and Sujoy Roy and Yash Shah and Lin Lee Cheong},
}

@article{walker2024integrated, 
    title={{Integrated Decision Gradients: Compute Your Attributions Where the Model Makes Its Decision}}, 
    volume={38}, 
    xxurl={https://ojs.aaai.org/index.php/AAAI/article/view/28336}, 
    xxDOI={10.1609/aaai.v38i6.28336}, 
    number={6}, 
    journal={AAAI}, 
    author={Walker, Chase and Jha, Sumit and Chen, Kenny and Ewetz, Rickard}, 
    year={2024}, 
    pages={5289-5297} 
}

@article{gao2022lussImageNetS,
  title={Large-scale Unsupervised Semantic Segmentation},
  author={Gao, Shanghua and Li, Zhong-Yu and Yang, Ming-Hsuan and Cheng, Ming-Ming and Han, Junwei and Torr, Philip},
  journal="TPAMI",
  year={2022}
}

@article{covert2021explaining,
  title={Explaining by removing: A unified framework for model explanation},
  author={Covert, Ian and Lundberg, Scott and Lee, Su-In},
  journal={Journal of Machine Learning Research},
  volume={22},
  number={209},
  pages={1--90},
  year={2021}
}

@article{CLEVRXAI202214,
    title = {{CLEVR-XAI}: A benchmark dataset for the ground truth evaluation of neural network explanations},
    journal = {Information Fusion},
    volume = {81},
    pages = {14-40},
    year = {2022},
    issn = {1566-2535},
    author = {Leila Arras and Ahmed Osman and Wojciech Samek},
}

@misc{vryniotis2021train,
  title={How to train state-of-the-art models using torchvision’s latest primitives},
  author={Vryniotis, Vasilis},
  year={2021},
  howpublished={{\small\url{https://pytorch.org/blog/how-to-train-state-of-the-art-models-using-torchvision-latest-primitives/}}},
  note = {Accessed on 2025-Nov-28},
}

@article{ILSVRC15,
    Author = {Olga Russakovsky and Jia Deng and Hao Su and Jonathan Krause and Sanjeev Satheesh and Sean Ma and Zhiheng Huang and Andrej Karpathy and Aditya Khosla and Michael Bernstein and Alexander C. Berg and Li Fei-Fei},
    Title = {{ImageNet Large Scale Visual Recognition Challenge}},
    Year = {2015},
    journal   = {Int. J. of Computer Vision (IJCV)},
    volume={115},
    number={3},
    pages={211-252}
}

@inproceedings{resnet50,
  title={Deep residual learning for image recognition},
  author={He, Kaiming and Zhang, Xiangyu and Ren, Shaoqing and Sun, Jian},
  booktitle={Proceedings of the IEEE conference on computer vision and pattern recognition},
  pages={770--778},
  year={2016}
}

@inproceedings{karras2018progressive,
    title = "{Progressive Growing of GANs for Improved Quality, Stability, and Variation}",
    author = "Tero Karras and Timo Aila and Samuli Laine and Jaakko Lehtinen",
    year = "2018",
    language = "English",
    booktitle = "Proceedings of International Conference on Learning Representations (ICLR) 2018"
}

@inproceedings{bergmann2019mvtec,
  title={{MVTec AD--A comprehensive real-world dataset for unsupervised anomaly detection}},
  author={Bergmann, Paul and Fauser, Michael and Sattlegger, David and Steger, Carsten},
  booktitle={Proceedings of the IEEE/CVF conference on computer vision and pattern recognition},
  pages={9592--9600},
  year={2019}
}

@inproceedings{ravi2021general,
  title={General frameworks for anomaly detection explainability: comparative study},
  author={Ravi, Ambareesh and Yu, Xiaozhuo and Santelices, Iara and Karray, Fakhri and Fidan, Baris},
  booktitle={2021 IEEE International Conference on Autonomous Systems (ICAS)},
  pages={1--5},
  year={2021},
  organization={IEEE}
}

@inproceedings{liu2021swin,
  title={{Swin transformer: Hierarchical vision transformer using shifted windows}},
  author={Liu, Ze and Lin, Yutong and Cao, Yue and Hu, Han and Wei, Yixuan and Zhang, Zheng and Lin, Stephen and Guo, Baining},
  booktitle={Proceedings of the IEEE/CVF international conference on computer vision},
  pages={10012--10022},
  year={2021}
}

@misc{Kartik,
  title = {{MultiLabel Classification of CelebA}},
  author={Kartik Batra},
  howpublished = {{\small\url{https://www.kaggle.com/code/kartikbatra/multilabelclassification/output}}},
  note = {Accessed on 2025-Nov-28},
  year = 2020
}

@inproceedings{sundararajan2017axiomatic,
  title={Axiomatic attribution for deep networks},
  author={Sundararajan, Mukund and Taly, Ankur and Yan, Qiqi},
  booktitle={International conference on machine learning},
  pages={3319--3328},
  year={2017},
  organization={PMLR}
}

@article{bach2015pixel,
  title={On pixel-wise explanations for non-linear classifier decisions by layer-wise relevance propagation},
  author={Bach, Sebastian and Binder, Alexander and Montavon, Gr{\'e}goire and Klauschen, Frederick and M{\"u}ller, Klaus-Robert and Samek, Wojciech},
  journal={PloS one},
  volume={10},
  number={7},
  pages={e0130140},
  year={2015},
  publisher={Public Library of Science San Francisco, CA USA}
}

@inproceedings{ancona2018towards,
  title={{Towards better understanding of gradient-based attribution methods for Deep Neural Networks}},
  author={Ancona, Marco and Ceolini, Enea and {\"O}ztireli, Cengiz and Gross, Markus},
  booktitle={6th International Conference on Learning Representations (ICLR)},
  year={2018}
}

@inproceedings{MSCOCO,
  title={Microsoft {COCO}: Common objects in context},
  author={Lin, Tsung-Yi and Maire, Michael and Belongie, Serge and Hays, James and Perona, Pietro and Ramanan, Deva and Doll{\'a}r, Piotr and Zitnick, C Lawrence},
  booktitle={In procs. of 13th European Conf. Computer Vision (ECCV) 2014},
  pages={740--755},
  year={2014},
  organization={Springer}
}

@misc{yolo11ultralytics,
  author = {Glenn Jocher and Jing Qiu},
  title = {Ultralytics YOLO11},
  version = {11.0.0},
  year = {2024},
  url = {https://github.com/ultralytics/ultralytics},
  note = {Accessed on 2025-Nov-28},
}

@InProceedings{Englebert_2023_ICCV,
    author    = {Englebert, Alexandre and Stassin, S\'edrick and Nanfack, G\'eraldin and Mahmoudi, Sidi Ahmed and Siebert, Xavier and Cornu, Olivier and De Vleeschouwer, Christophe},
    title     = {Explaining Through Transformer Input Sampling},
    booktitle = {Proceedings of the IEEE/CVF International Conference on Computer Vision (ICCV) Workshops},
    month     = {October},
    year      = {2023},
    pages     = {806-815}
}

@article{hama2023deletion,
  title={Deletion and insertion tests in regression models},
  author={Hama, Naofumi and Mase, Masayoshi and Owen, Art B},
  journal={Journal of Machine Learning Research},
  volume={24},
  number={290},
  pages={1--38},
  year={2023}
}

@inproceedings{rezatofighi2019generalized,
  title={Generalized intersection over union: A metric and a loss for bounding box regression},
  author={Rezatofighi, Hamid and Tsoi, Nathan and Gwak, JunYoung and Sadeghian, Amir and Reid, Ian and Savarese, Silvio},
  booktitle={Proceedings of the IEEE/CVF conference on computer vision and pattern recognition},
  pages={658--666},
  year={2019}
}

@inproceedings{knab2025beyond,
  title={Beyond Pixels: Enhancing {LIME} with Hierarchical Features and Segmentation Foundation Models},
  author={Knab, Patrick and Marton, Sascha and Bartelt, Christian},
  booktitle={ICLR 2025 Workshop on Foundation Models in the Wild},
  year={2025}
}

@inproceedings{kirillov2023segment,
  title={Segment anything},
  author={Kirillov, Alexander and Mintun, Eric and Ravi, Nikhila and Mao, Hanzi and Rolland, Chloe and Gustafson, Laura and Xiao, Tete and Whitehead, Spencer and Berg, Alexander C and Lo, Wan-Yen and others},
  booktitle={Proceedings of the IEEE/CVF international conference on computer vision},
  pages={4015--4026},
  year={2023}
}

@article{sun2023explain,
  title={Explain any concept: Segment anything meets concept-based explanation},
  author={Sun, Ao and Ma, Pingchuan and Yuan, Yuanyuan and Wang, Shuai},
  journal={Advances in Neural Information Processing Systems},
  volume={36},
  pages={21826--21840},
  year={2023}
}

@inproceedings{dosovitskiyimage,
  title={An Image is Worth 16x16 Words: Transformers for Image Recognition at Scale},
  author={Dosovitskiy, Alexey and Beyer, Lucas and Kolesnikov, Alexander and Weissenborn, Dirk and Zhai, Xiaohua and Unterthiner, Thomas and Dehghani, Mostafa and Minderer, Matthias and Heigold, Georg and Gelly, Sylvain and others},
  booktitle={International Conference on Learning Representations},
  year={2020}
}

@article{deng1994complexity,
  title={On the complexity of cooperative solution concepts},
  author={Deng, Xiaotie and Papadimitriou, Christos H},
  journal={Mathematics of operations research},
  volume={19},
  number={2},
  pages={257--266},
  year={1994},
  publisher={INFORMS}
}
